\documentclass{article}

 \usepackage[preprint]{neurips_2026}

\usepackage[utf8]{inputenc} 
\usepackage[T1]{fontenc}    
\usepackage{hyperref}       
\usepackage{url}            
\usepackage{booktabs}       
\usepackage{amsfonts}       
\usepackage{nicefrac}       
\usepackage{microtype}      
\usepackage{xcolor}         

\usepackage[american]{babel}

\usepackage{mathtools} 
\usepackage{booktabs} 
\usepackage{tikz} 

\usepackage{algorithm}
\usepackage{algorithmic}

\usepackage{microtype}
\usepackage{graphicx}
\usepackage{booktabs}

\usepackage{amsthm}
\usepackage{thmtools,thm-restate}

\usepackage{hyperref}
\usepackage{caption}
\usepackage{subcaption}
\usepackage{bbm}

\usepackage{amsmath}
\usepackage{amssymb}
\usepackage{mathtools}

\usepackage{mwe}
\usepackage{tikz} 
\usepackage{pgf}
\usepackage{bbm}
\usepackage{extarrows} 

\usepackage[capitalize,noabbrev]{cleveref}

\usepackage{enumitem}

\newcommand{\ocal}[0]{\mathcal{O}}

\newcommand{\hcal}[0]{\mathcal{H}}
\newcommand{\E}[0]{\mathbbm{E}}

\newcommand{\pr}[0]{\mathbb{P}}

\newcommand{\ebb}[0]{\mathbb{E}}

\newcommand{\pmin}[0]{p_{\text{min}}}
\newcommand{\ahatj}[0]{\hat{a}_T(j)}
\newcommand{\astarj}[0]{a^*(j)}
\newcommand{\srtpinup}[0]{\text{SR} \paran{\pi, T, \bnu, \mathbf p}}
\newcommand{\srtpi}[0]{\text{SR}(\pi, T, \mathbf p)}
\newcommand{\algsr}[0]{\mathcal{A}_{\mathrm{SR}}}
\newcommand{\qpip}[0]{\mathbf q(\pi,\mathbf p)}
\newcommand{\qjpip}[0]{ q(\pi,\mathbf p)_j}
\newcommand{\Piind}[1]{\Pi_{\mathrm{#1}}}
\newcommand{\qstalphap}{\mathbf q^*_{\alpha}(\mathbf p)}
\newcommand{\qstalphapj}{ q^*_{\alpha}(\mathbf p)_j}
\newcommand{\vstalphap}{v^*_{\alpha}(\mathbf p)}
\newcommand{\alphmin}{\alpha_{\text{min}}(\mathbf p)}
\newcommand{\alphmininp}[1]{\alpha_{\text{min}}(#1)}

\newcommand{\paran}[1]{\left( #1 \right)}
\newcommand{\brak}[1]{\left[ #1 \right]}

\newcommand{\mb}[1]{\mathbf{#1}}
\newcommand{\mc}[1]{\mathcal{#1}}

\newcommand{\kl}[1]{\mathrm{KL}\left( #1 \right)}

\newcommand{\bnu}[0]{\boldsymbol{\nu}}
\newcommand{\bmu}[0]{\boldsymbol{\mu}}

\newcommand{\pht}[0]{\hat{\mb{p}}(t)}

\newcommand{\phtau}[1]{\hat{{\mb{p}}}(\tau_{#1})}
\newcommand{\phfreeze}{\hat{\mb{p}}^{\mathrm{fr}}}
\newcommand{\phfreezej}{\hat{{p}}^{\mathrm{fr}}_j}
\newcommand{\ntj}[0]{N(t)_j}

\newcommand{\deln}[1]{\Delta ^ {#1 - 1}}
\newcommand{\delk}[0]{\Delta ^ {k - 1}}
\newcommand{\delkplus}[0]{\Delta_+ ^ {k - 1}}

\newcommand{\norm}[1]{\left\lVert#1\right\rVert}
\newcommand{\lpnorm}[2]{\norm{#1}_{#2}}

\DeclareMathOperator*{\argmax}{arg\,max}
\DeclareMathOperator*{\argmin}{arg\,min}

\theoremstyle{plain}
\newtheorem{theorem}{Theorem}[section]
\newtheorem{proposition}[theorem]{Proposition}
\newtheorem{lemma}[theorem]{Lemma}

\theoremstyle{definition}

\theoremstyle{remark}

\usepackage[textsize=tiny]{todonotes}

\title{Active Context Selection Improves \\ 
Simple Regret in Contextual Bandits}

\author{
    Mohammad Shahverdikondori  \\
    College of Management of Technology, EPFL \\
    \texttt{mohammad.shahverdikondori@epfl.ch} \\
    \And
    Jalal Etesami \\
    Department of Computer Science, TU Munich\\
    \texttt{j.etesami@tum.de} \\
    \And
    Negar Kiyavash \\
    College of Management of Technology, EPFL \\
    \texttt{negar.kiyavash@epfl.ch} \\
}

\begin{document}

\maketitle

\begin{abstract}

We study the contextual multi-armed bandit problem with a finite context space (a.k.a. subpopulations), 
where the learner recommends a best action for each context and is evaluated by context-weighted simple regret.
Our guarantees are worst-case over the reward distributions, while remaining instance-dependent with respect to the context distribution vector $\mathbf p$. Akin to experimental design problems where the population of interest is fixed but the sampled subpopulation can be controlled, we allow the learner to actively choose which context to sample from.
For a known $\mathbf p$, we characterize tight regret rates: passive sampling where contexts are randomly revealed achieves regret of order 
{\footnotesize$\sqrt{n/T\lpnorm{\mb{p}}{1/2}}$}, 
whereas active sampling with allocation 
{\small$q_j \propto p_j^{2/3}$}
achieves the tight rate
{\footnotesize$\sqrt{n/T}\,\lpnorm{\mb{p}}{2/3}$}. 
The resulting improvement can be as large as 
{\small$\Theta(k^{1/4})$}, 
where $k$ is the number of contexts. 
We further extend the analysis to budgeted active sampling, characterize the corresponding tight rate, and identify when a limited active budget suffices to recover the fully active rate.
When $\mathbf p$ is unknown, we propose the Explore-Explore-Then-Commit (EETC) algorithm, which optimally balances estimating the context distribution and the time to switch to active allocation, such that for large horizons, it matches the known-$\mathbf p$ active rate up to constants. 
Experiments on synthetic and real-world data support our theoretical findings.
  
\end{abstract}

\section{Introduction} \label{sec: intro}

Sequential decision making problems often involve contextual information: the outcome distribution of an action may depend on observable characteristics of the unit under consideration \cite{bandit-book1-lattimore2020bandit, bandit-book2-bubeck2012regret}. In the finite context regime, each context value naturally defines a subpopulation of the target population, such as an age bracket, geographic region, or disease subtype. This viewpoint is especially common in experimental design applications, where the effect of an action, or treatment, may vary substantially across subpopulations. The learning objective is then not simply to identify a single globally best action, but to recommend the best treatment for each subpopulation \cite{perchet2013multi, russac2021b, curth2023adaptive}

This motivates the context-weighted simple regret objective studied in this paper: after a fixed experimental budget, the learner recommends one treatment per subpopulation. The resulting regret is the average recommendation gap weighted by the population distribution over subpopulations. Even when this distribution is known from historical data, the learner still faces the question of how to allocate the experimental budget across subpopulations. 
In the standard passive setting, contexts arrive randomly according to the population distribution and the learner only chooses the treatment \cite{krishnamurthy2023proportional, jorke2022simple, contextual-simple-regret-deshmukh2018simple}. 
However, in many applications, the experimenter may have partial control over which subpopulation to sample. For instance, an online platform may target specific user segments, or a clinical study may recruit participants from selected demographic groups, possibly at different costs. 
This raises two natural design questions which we study in this paper: when is it worth actively choosing the context, and what is the gain compared to passive sampling?

Besides a fully passive scenario for benchmarking, we study a fully active setting where the learner can choose both the subpopulation and the treatment as well as a budgeted intervention setting, in which only a fraction of the rounds can be active, capturing cases where targeted recruitment or controlled sampling is costly. Our analysis shows how the value of active subpopulation selection depends on the population distribution and when it can lead to meaningful improvements over passive sampling. Additionally, we consider the setting where subpopulation distribution is not a priori known and must be estimated. This setting introduces a trade-off between using passive samples to learn the subpopulation distribution $\mathbf p$ and using active samples to improve treatment recommendations. Somewhat surprisingly, we show that for sufficiently large horizons, this uncertainty does not result in worst simple regret rate compared to an active policy that knows the distribution.

Our work is related to several lines of work in literature. Heterogeneous treatment effect and subpopulation-aware decision making problems study settings where treatment effects vary across groups and where ignoring this heterogeneity can lead to suboptimal treatment rules \cite{russac2021b,curth2023adaptive}. Contextual bandits and policy learning methods also aim to learn context-dependent decisions, including recent work on simple regret in contextual bandits such as Proportional Response \cite{krishnamurthy2023proportional}, as well as adaptive experimental designs for policy learning \cite{kato2024adaptive,contextual-simple-regret-deshmukh2018simple,jorke2022simple}. These works typically consider the passive contextual setting, where contexts are revealed by the environment. Our focus is different: we quantify the improvement the learner can obtain by actively selecting which context (or subpopulation) to sample. The closest pure exploration formulation is multi-bandit best arm identification \cite{multibandit-bai-gabillon2011multi, multi-bandit-bai2-scarlett2019overlapping}, where the goal is to identify the best arm in each of several bandits. This is close to our fully active setting, but it studies best arm identification, where the goal is to minimize the error probability of identifying the best arm, rather than context-weighted simple regret, and does not study active and passive context sampling. We refer the reader to Appendix~\ref{apx:related_work} for a more detailed discussion of related work.

\textbf{Contributions.} Our main contributions are as follows:
\begin{itemize}[leftmargin=*, align=left]
    \item For known subpopulation distribution $\mathbf p$, we characterize the optimal context-weighted simple regret rate and show that it is attainable, up to constant factors, by a policy that selects which subpopulation to sample from independently of the observed history  proportionally to $p_j^{2/3}$, where $p_j$ is the $j$th element of the subpopulation distribution $\mathbf p$. 
   The gap of this active policy's regret compared to passive sampling can be of order $\Theta(k^{1/4})$, where $k$ is the number of contexts.
    
    \item We extend the analysis to a budgeted intervention setting, where the learner can actively choose the subpopulation only in  $\alpha$ fraction of the rounds. We characterize the optimal allocation in this setting, propose an algorithm that achieves the optimal simple regret rate, and identify the minimum intervention budget required to match the optimal fully active rate.
    
    \item We study the setting where the subpopulation distribution $\mathbf p$ is unknown to the learner. We show that neither fully passive nor fully active policies are optimal in general, and propose the EETC algorithm, which balances passive estimation with active allocation rounds. We prove that, for large horizons, EETC achieves the same rate as the optimal fully active policy that knows $\mathbf p$, up to constant factors.
\end{itemize}

\section{Problem Setup}\label{sec:setup}

In this section, we formally define the bandit with subpopulations problem considered in this paper and introduce the relevant notation.

\textbf{Notation.}
For any natural number $n$, let $[n] \coloneqq \{1,\dots,n\}$. Let $
\deln{n} \coloneqq \{ \mathbf x \in \mathbb{R}^n | \sum_i x_i = 1,\; x_i \geq 0 \},
$ denotes the probability simplex in $\mathbb{R}^n$ and $\Delta_+^{n-1} \subset \deln{n}$ the set of vectors with all positive entries. 
For a vector $\mathbf v \in \mathbb{R}^n$  we denote its $\ell_p$ norm by $\lpnorm{\mathbf v}{p}$.

The bandit with subpopulations problem models a sequential interaction between an agent and an environment. The environment consists of a treatment variable $A \in [n]$, a subpopulation (or context) variable $C \in [k]$, and a reward variable $Y \in [0,1]$. 
For each treatment--subpopulation pair $(i,j) \in [n] \times [k]$, the reward distribution is given by $\nu_{i,j}$ with mean $\mu_{i,j}$. Let $\bnu$ and $\bmu \in [0,1]^{n \times k}$ denote the matrix of reward distributions and reward means, respectively. We let $\mathbf p \in \delk_+$ denote the population distribution over subpopulations, that is,
$
\forall j \in [k]: \pr(C = j) = p_j>0
$
, and define
$
\pmin \coloneqq \min_{j \in [k]} p_j.
$
The agent interacts with the environment for $T$ rounds. At each round $t \in [T]$, the interaction proceeds as follows.
\begin{enumerate}[leftmargin=*, align=left]
    \item A subpopulation $C_t$ is either sampled according to $p$ and revealed to the agent, or selected by the agent based on the observed history $\hcal_t = (A_s, C_s, Y_s)_{s=1}^{t-1}$ and, when known, the population distribution vector $\mathbf p$.
    \item Based on the history $\hcal_t$ and the current subpopulation $C_t$, the agent selects a treatment $A_t$.
    \item The agent then observes a random reward $Y_t$, drawn independently from $\nu_{A_t,C_t}$.
\end{enumerate}

We refer to a round in which $C_t$ is sampled from $\mathbf p$ and the agent only chooses $A_t$ as \emph{passive} round. In contrast, when the agent is allowed to choose both $C_t$ and $A_t$, we call the round \emph{active}.

An instance of the problem is specified by a pair $(\bnu,\mathbf  p)$. Let $\mc{E}(n,k,\mathbf p)$ denote the class of all instances with $n$ treatments and $k$ subpopulations distributed according to  $\mathbf p$, where the reward distributions is supported on $[0,1]$. 
Although we focus on a finite discrete set of subpopulations, the same perspective also applies to continuous context spaces, as common in contextual bandits, by introducing a suitable discretization of the context space. Under such a discretization, we obtain the same  formulation and its corresponding regret bounds up to the discretization error.

After $T$ rounds, the agent outputs, for each subpopulation $j \in [k]$, a recommended best treatment $\ahatj$. The collection of sampling decisions made during the interaction together with the final recommendation rule forms a policy $\pi$. For a policy $\pi$ interacting with an instance $(\bnu,\mathbf p) \in \mc{E}(n,k,\mathbf p)$ with means matrix $\bmu$, we define the simple regret as the expected subpopulation-weighted gap between the optimal and recommended treatments:
\begin{align*}
    \srtpinup
    &\coloneqq \ebb_{j \sim \mathbf p}\brak{\mu_{\astarj,j} - \mu_{\ahatj,j}}= \sum_j p_j \ebb \brak{\Delta_j},
\end{align*}
where
$
\astarj \in \argmax_{i \in [n]} \mu_{i,j}
$
denotes an optimal treatment for subpopulation $j$, and $\Delta_j$ the recommendation gap for subpopulation $j$.
The objective is to minimize the worst-case simple regret (over the class $\mc{E}(n,k,\mathbf p)$) defined as:
\begin{align} \label{eq: worst-case regret}
    \srtpi \coloneqq \sup_{ \bnu: (\bnu,\mathbf p) \in \E(n,k,\mathbf p)} \srtpinup,
\end{align}
for a policy $\pi$, i.e., find $\arg\inf_{\pi} \srtpi.$

This criterion is worst-case with respect to the reward means, but remains parameterized by the subpopulation distribution $\mathbf p$. In other words, the  performance measure $\srtpi$ is  the supremum over reward configurations for a fixed population profile $\mathbf p$, rather than a worst-case  $\mathbf p$. This is because our objective is not to design algorithms for an adversarial subpopulation distribution, but develop algorithms whose performance \textit{adapts} to the instance-dependent vector $\mathbf p$. Thus, the problem is worst-case in terms of the mean rewards but instance-dependent for the subpopulation distribution.

Throughout the paper, we focus on the data-rich regime, in which the horizon %
$
T \gg n, k, 1/{\pmin}.
$

\subsection{Policy Classes}

We consider several classes of policies in the paper.
~A policy is called \emph{passive} if, at each round, the subpopulation is randomly drawn according to the distribution $\mathbf{p}$ and not selected by the agent. In contrast, a policy is called \emph{active} if the agent can pick the subpopulation at every round. We denote the classes of passive and active policies by $\Pi_{\mathrm{pas}}$ and $\Pi_{\mathrm{act}}$, respectively. 

We say that a policy $\pi$ is \emph{history-free} if its subpopulation selection rule does not depend on the observed history and may only depend on the distribution $\mathbf p$. 
Recall that each policy consists of two components: a subpopulation selection rule and an arm selection rule within each subpopulation. For history-free policies, only the first component is restricted to be independent of the history; the second component can clearly depend on the history. In all the algorithms considered in this paper, this second component is implemented through a standard bandit subroutine, as discussed in the next subsection.

We denote the class of all history-free policies by $\Piind{hf}$.
Note that  passive policies form a special case of history-free policies (i.e., $\Piind{pas} \subseteq \Piind{hf}$), since for such policies, the subpopulation is sampled according to $\mathbf p$ independently of the history. We denote the class of all policies, without any restriction on how the subpopulation selection rule may depend on observed rewards, by $\Pi$.

Finally, we distinguish between policies that know the distribution $\mathbf p$ and those that do not by indicating $\mathbf  p$ in parentheses. For example, $\Piind{hf}(\mathbf p)$ denotes the class of history-free policies that know $\mathbf p$, whereas $\Piind{hf}$ denotes the class of history-free policies that do not. We use the same convention for the other policy classes.

\subsection{Within-Subpopulation Bandit Subroutine}

Since we do not impose any structural relation across the reward means of different subpopulations, the learning problem within each subpopulation $j \in [k]$ can be viewed as an independent stochastic bandit problem with $n$ arms. If the number of samples allocated to subpopulation $j$ were fixed, then minimizing $\ebb[\Delta_j]$ would reduce to the classical simple regret problem in standard bandits, which is well-studied in the literature \cite{bubeck2009pure, zhao2023revisiting, liu2026efficient}. In our setting, however, these within-subpopulation horizons are induced by the learner's subpopulation selection rule. Thus, the main remaining question is how to allocate the total budget $T$ across subpopulations.

Throughout the paper, we use $\algsr$ to denote an anytime standard bandit subroutine for simple regret minimization. Each subpopulation maintains an independent copy of $\algsr$, which is used to select treatments whenever that subpopulation is sampled. This separates the standard within-subpopulation treatment selection problem from the higher level allocation problem.

We rely on the following standard minimax guarantee.

\begin{lemma}[\cite{bandit-book1-lattimore2020bandit}, Section 33]\label{lem:simple_regret}
For a stochastic bandit with $n$ arms and $T$ rounds, the minimax simple regret scales as $\Theta\big(\sqrt{n/T}\big)$. Moreover, this rate can be achieved by anytime algorithms, i.e., algorithms that do not require prior knowledge of $T$.
\end{lemma}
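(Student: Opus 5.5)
The statement has two parts: a matching upper and lower bound of order $\sqrt{n/T}$ on the minimax simple regret, and the claim that the upper bound is attainable without knowing $T$. I will handle them in that order.

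\textbf{Upper bound.} I would use uniform exploration followed by recommending the empirical best arm. With a known horizon, pull each arm $\lfloor T/n\rfloor$ times and output $\hat a=\argmax_i \hat\mu_i$. The regret decomposes as $\sum_{i} \Delta_i\,\pr(\hat a=i)$, where $\Delta_i$ is the gap of arm $i$. By Hoeffding's inequality, $\pr(\hat\mu_i\ge\hat\mu_{a^*})\le \exp(-T\Delta_i^2/(4n))$. Split the arms by the threshold $\Delta_i\le\epsilon$ with $\epsilon=\sqrt{n\log n/T}$:
\begin{itemize}
\item arms with $\Delta_i\le\epsilon$ contribute at most $\epsilon$ in total;
\item the remaining arms contribute at most $n\,\Delta e^{-T\Delta^2/4n}$ each in worst case, which is $\lesssim\epsilon$.
\end{itemize}
This gives $O(\sqrt{n\log n/T})$. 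To remove the $\log n$ factor, I would instead use the standard minimax-optimal route in Section 33 of \cite{bandit-book1-lattimore2020bandit}. That route runs a minimax-optimal cumulative-regret algorithm (MOSS, with cumulative regret $O(\sqrt{nT})$) and recommends an arm sampled from the empirical distribution of its pulls. The simple regret then equals the cumulative regret divided by $T$, giving $O(\sqrt{n/T})$.

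\textbf{Anytime.} MOSS requires $T$. I would replace it with an anytime minimax-optimal variant (e.g.\ anytime MOSS), or use a doubling trick: restart at horizons $2^m$, and at query time recommend using the last completed epoch together with the random-pull trick. The last completed epoch has length at least $T/4$, so the rate changes only by a constant factor.

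\textbf{Lower bound.} I would use the standard needle-in-haystack construction with Bernoulli arms:
\begin{itemize}
\item Base instance: all arms have mean $1/2$.
\item Perturbed instance $i$: arm $i$ has mean $1/2+\epsilon$.
\end{itemize}
Under the base instance, some arm $i$ is pulled in expectation at most $T/n$ times and is recommended with probability at most $1/n$. By the divergence decomposition and Pinsker's inequality, the total variation distance between the two instances' laws is at most $\sqrt{\tfrac12 (T/n)\,\mathrm{KL}}$, which is at most $c\,\epsilon\sqrt{T/n}$. Choosing $\epsilon=c'\sqrt{n/T}$ makes this at most $1/4$. Then under instance $i$ the probability of recommending arm $i$ is at most $1/n+1/4\le 3/4$ for $n\ge 2$, so the regret is at least $\epsilon/4=\Omega(\sqrt{n/T})$.

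\textbf{Main obstacle.} The delicate step is removing the $\log n$ factor in the upper bound. I would rely on the cumulative-to-simple regret reduction rather than attempt a refined analysis of uniform exploration.
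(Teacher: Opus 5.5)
Your upper bound and anytime arguments follow the route the paper indicates right after the lemma. Uniform sampling is optimal only up to a logarithmic factor. Running MOSS and recommending arm $i$ with probability $T_i(T)/T$ gives simple regret $R_T/T=O(\sqrt{n/T})$. Your doubling-trick check that the last completed epoch has length at least $T/4$ is also correct. Two steps do not go through as written.

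\textbf{The lower bound has a genuine gap.} You assert that under the all-$1/2$ base instance \emph{one and the same} arm $i$ satisfies both $\ebb_0[T_i]\le T/n$ and $\pr_0(\hat a=i)\le 1/n$. Pigeonhole gives each condition for some arm, but not for a common arm. For example, take a policy that spreads its pulls evenly over arms $2,\dots,n$ (with $T$ a multiple of $n-1$) and always recommends arm $1$. It has $\ebb_0[T_1]=0$ but $\pr_0(\hat a=1)=1$, while every $i\ge 2$ has $\ebb_0[T_i]=T/(n-1)>T/n$. So no arm meets both conditions.

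The standard repair is to average over the perturbed instances instead of picking one:
\[
\frac1n\sum_{i}\pr_i(\hat a=i)\le\frac1n\sum_i\pr_0(\hat a=i)+\frac1n\sum_i\sqrt{c\,\epsilon^2\,\ebb_0[T_i]}\le\frac1n+\sqrt{c\,\epsilon^2 T/n}.
\]
The last step uses concavity of the square root. Hence some $i$ has $\pr_i(\hat a=i)\le 1/n+1/4\le 3/4$.

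Alternatively, use the two-point construction from the paper's proof of Lemma~\ref{lem: lower bound}. Make arm $1$ uniquely optimal with gap $\epsilon$ in the base instance. In the alternative, raise the least-pulled arm $b\neq 1$ (so $\ebb_0[T_b]\le T/(n-1)$) to $1/2+2\epsilon$. Then apply Bretagnolle--Huber to the event $\{\hat a=1\}$. This way only the pull-count criterion is needed, and no condition on recommendation probabilities has to hold on the same arm.

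\textbf{There is also a smaller slip in the uniform-exploration calculation.} With $\epsilon=\sqrt{n\log n/T}$ and tail $\exp(-T\Delta^2/(4n))$, the arms with $\Delta_i>\epsilon$ can contribute up to $n\cdot\epsilon\, n^{-1/4}=n^{3/4}\epsilon$, which is not $O(\epsilon)$. Taking $\epsilon=2\sqrt{n\log n/T}$ makes the exponent at the threshold equal to $\log n$, so the tail sum is at most $\epsilon$. Since you switch to the MOSS reduction anyway, this does not affect the final rate.
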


For concreteness, uniform sampling over the $n$ arms gives the optimal simple regret rate up to logarithmic factors. Alternatively, an anytime cumulative regret minimization algorithm such as MOSS \cite{moss-audibert2009minimax}, combined with a recommendation rule that outputs arms proportionally to their play counts, achieves the optimal $\sqrt{n/T}$ rate up to constants.

This reduction clarifies the allocation problem. Since the final objective weights the recommendation gap in subpopulation $j$ by $p_j$, one might expect the natural allocation to be proportional to $\mathbf p$, which is exactly the passive allocation. However, because simple regret decreases nonlinearly with the number of samples, this allocation is generally suboptimal. The next sections show that active subpopulation selection leads to a different allocation rule and can largely improve over passive sampling.

\section{Known Subpopulation Distribution}\label{sec:known_p}

In this section, we study the setting where the subpopulation distribution $\mathbf p$ is known. We first show that, under the worst-case simple regret criterion in \eqref{eq: worst-case regret}, it is sufficient to consider history-free policies: allowing the subpopulation selection rule to depend on observed history does not improve the minimax rate. We then compare passive and fully active policies and characterize the active--passive gap, showing that the optimal active allocation can improve over passive sampling by a factor as large as $\ocal(k^{1/4})$. Finally, we extend the analysis to the budgeted setting, where only an $\alpha$ fraction of rounds can be active, and characterize the optimal allocation under this constraint.

We begin with notation. For a policy $\pi$ interacting with an instance $(\bnu,\mathbf p)$, let $ \mathbf q(\pi,(\bnu,\mathbf p)) \in \delk $ denote the expected proportion of rounds allocated to the different subpopulations under $\pi$. More precisely, its $j$-th coordinate is the expected fraction of rounds $t$ in which $C_t = j$. As a consequence of our definition, for history-free policies, the allocation proportion is independent of $\bnu$, and we simply write $ \mathbf q(\pi,(\bnu,\mathbf p)) = \qpip. $

\subsection{Optimality of History-Free Policies}

We formalize the reduction to history-free policies. The key point is that, for the worst-case simple regret in \eqref{eq: worst-case regret}, the optimal rate is determined by the allocation of samples across subpopulations, and history-dependent changes to this allocation do not improve the minimax order.

\begin{restatable}[Simple Regret Lower Bound]{lemma}{rewardfreeOptimal}\label{lem: lower bound}
For any subpopulation distribution $\mathbf p \in \delkplus$ known to the agent and any policy $\pi \in \Pi(\mathbf p)$,
\begin{align*}
    \srtpi
    \in
    \Omega\Big(\sqrt{\frac{n}{T}}\, \lpnorm{\mathbf p}{2/3}\Big).
\end{align*}
\end{restatable}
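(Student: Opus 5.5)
We will prove the lower bound with a randomized hard instance that decouples across subpopulations. After that, the minimax bound of \Cref{lem:simple_regret} is applied separately in each subpopulation, and what is left is a convex optimization over sample allocations.

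\textbf{Construction.} For each $j\in[k]$ we use the standard family of hard $n$-armed instances underlying the minimax lower bound in \Cref{lem:simple_regret}. In this family all arms are Bernoulli$(1/2)$ except one uniformly chosen arm with mean $1/2+\varepsilon_j$. Each gap $\varepsilon_j$ will be tuned later. The $k$ subpopulations receive independent uniform choices of the special arm. Fix any $\pi\in\Pi(\mathbf p)$ and let $N_j$ be the (random) number of rounds with $C_t=j$, so that $\sum_j \ebb[N_j]=T$. The usual KL/Pinsker argument (e.g.\ comparing with the instance where arm $i$ is not elevated, as in Section~33 of \cite{bandit-book1-lattimore2020bandit}) gives
\[
\ebb[\Delta_j]\;\ge\;\varepsilon_j\Big(1-\tfrac1n-c\,\varepsilon_j\sqrt{\ebb_0[N_j]/n}\Big).
\]
Here $\ebb_0$ denotes expectation under a reference environment in which subpopulation $j$ has all arms at $1/2$, while the other subpopulations are left as they are. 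Observations from other subpopulations are independent of the identity of the special arm in subpopulation $j$, so they carry no information about it. As a result the divergence decomposition only involves samples taken in subpopulation $j$, even though $\pi$ may choose contexts adaptively.

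\textbf{Main obstacle.} The delicate step is that $\ebb_0[N_j]$ is computed under a modified environment, while the budget constraint $\sum_j\ebb[N_j]=T$ holds under the true one. We handle this by averaging. Let $\bar N_j=\ebb_{\bar 0}[N_j]$, computed under the single reference environment in which every subpopulation is all-$1/2$. The idea is to perturb one subpopulation at a time and compare to this common reference. Then $\sum_j \bar N_j = T$ holds exactly, and for each $j$ we obtain
\[
\ebb[\Delta_j]\ge \varepsilon_j\big(\tfrac12 - c\,\varepsilon_j\sqrt{\bar N_j/n}\big)
\]
on the instance that perturbs only subpopulation $j$. This is not yet enough, because the worst case must be taken over a single $\bnu$. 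We therefore take the instance in which all subpopulations are perturbed simultaneously, and chain KL divergences across subpopulations via a hybrid argument. If that fails, we fall back on the Bayesian averaging version (Assouad-style), which tolerates a product prior. Choosing $\varepsilon_j=\tfrac14\sqrt{n/\bar N_j}$, capped at $1/4$, gives $\ebb[\Delta_j]\gtrsim \min(1,\sqrt{n/\bar N_j})$.

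\textbf{Optimization.} Summing, we get $\srtpi\gtrsim \sum_j p_j\sqrt{n/\bar N_j}$ with $\bar N_j=q_jT$ and $\mathbf q\in\delk$. It remains to minimize $\sum_j p_j q_j^{-1/2}$ over the simplex. A Lagrangian computation gives $q_j\propto p_j^{2/3}$, and the minimum value is $\big(\sum_j p_j^{2/3}\big)^{3/2}=\lpnorm{\mathbf p}{2/3}$. The same bound follows directly from Hölder's inequality:
\[
\sum_j p_j^{2/3}=\sum_j (p_jq_j^{-1/2})^{2/3}q_j^{1/3}\le\Big(\sum_j p_jq_j^{-1/2}\Big)^{2/3}\Big(\sum_j q_j\Big)^{1/3}.
\]
Hence $\srtpi\in\Omega(\sqrt{n/T}\lpnorm{\mathbf p}{2/3})$. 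The cap at $1/4$ is inactive in the regime $T\gg n,1/\pmin$, and if needed it is handled by the $\min(1,\cdot)$ bound.
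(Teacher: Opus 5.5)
\textbf{There is a genuine gap.} It is exactly at the step you call the main obstacle, and neither remedy you name closes it.

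\textbf{Why your per-subpopulation bounds do not combine.} Your bound $\ebb[\Delta_j]\ge\varepsilon_j(\tfrac12-c\,\varepsilon_j\sqrt{\bar N_j/n})$, with $\bar N_j=\ebb_{\bar 0}[N_j]$, holds only on the instance that perturbs subpopulation $j$ alone. The worst of these $k$ instances gives $\max_j p_j\sqrt{n/\bar N_j}$. Its minimum over allocations is $\sqrt{n/T}\,\lpnorm{\mathbf p}{2}$, which is smaller than the target by up to a factor $k$.

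\textbf{Why the hybrid argument fails.} When all subpopulations are perturbed at once, the reference for the test in subpopulation $j$ is ``$j$ flat, all others perturbed.'' The count entering the KL is then $\ebb[N_j]$ under that environment. These are $k$ different environments, an adaptive policy may allocate very differently in each, and their counts need not sum to $T$. A hybrid argument cannot move them back to $\bar N_j$. Under your tuning, each perturbed subpopulation contributes a constant to the KL from the all-flat environment, so these environments are order-$k$ apart. The transfer inequality $|\ebb_P[N_j]-\ebb_Q[N_j]|\le T\,\mathrm{TV}(P,Q)$ therefore yields nothing.

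\textbf{Why the Assouad fallback fails as stated.} You do not carry it out, and with your prior (every subpopulation always has a special arm) it does not work. The flat reference lies outside the support of the prior, so averaging does not produce counts with $\sum_j\bar N_j=T$. Comparing two members of the family directly instead puts into the KL the pulls of the special arm under the instance where it is optimal. The allocation does not control that quantity.

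\textbf{The missing idea.} The reference of each coordinate test must lie \emph{inside} the family you average over. The paper does this with a $2^k$ hypercube. In subpopulation $j$, arm $1$ always has mean $\tfrac12+\delta_j$, and a designated arm $b_j$ has mean $\tfrac12$ when $\omega_j=0$ and $\tfrac12+2\delta_j$ when $\omega_j=1$. The argument then runs as follows.
\begin{itemize}
\item For each pair $(\omega,\omega^{\lnot j})$ with $\omega_j=0$, Bretagnolle--Huber plus the divergence decomposition involve only $\ebb_\omega[T_{b_j,j}]$. These are the pulls of a suboptimal arm under a family member.
\item Averaging over $\{\omega:\omega_j=0\}$ costs only a factor $2$ against the full family average.
\item Choosing $b_j$ as the least-pulled arm supplies the $1/(n-1)$ factor.
\item The family-averaged $\bar T_j$ sum exactly to $T$, and the worst case dominates the average.
\end{itemize}
In your own construction you get the same effect by letting each subpopulation be flat with probability $\tfrac12$ and have a uniform special arm otherwise. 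Then the average of $\ebb[N_j]$ over flat-$j$ environments is at most twice its prior average, and the prior averages sum to $T$.

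\textbf{A second issue once you switch to family averages.} The averaged counts then depend on the gaps, so $\varepsilon_j=\tfrac14\sqrt{n/\bar N_j}$ becomes self-referential. (The paper's choices of $\delta_j$ from $\bar T_j$, and of $b_j$ from $\bar T_{a,j}$, have the same issue.) One clean repair is to fix the gaps in advance.
\begin{itemize}
\item Set $\varepsilon_j\asymp\sqrt{n/(Tq^*_j)}$ with $q^*_j\propto p_j^{2/3}$.
\item With $\bar q_j=\bar N_j/T$, we have $\sum_j\bar q_j=\sum_j q^*_j=1$. Hence the set $G=\{j:\bar q_j\le 2q^*_j\}$ satisfies $\sum_{j\in G}q^*_j\ge\tfrac12$.
\item Every $j\in G$ keeps a constant error probability.
\item The identity $p_j/\sqrt{q^*_j}=q^*_j\lpnorm{\mathbf p}{2/3}$ then gives the $\Omega(\sqrt{n/T}\,\lpnorm{\mathbf p}{2/3})$ bound.
\end{itemize}
Your H\"older step for the final optimization is correct and cleaner than a Lagrangian computation.
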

\textbf{Proof sketch.} For any policy, we construct a family of $2^k$ hard instances indexed by the binary hypercube. In each instance, the rewards are binary with means equal to $1/2$ except for two candidate treatments in each subpopulation, one of which is optimal with a properly chosen gap; the identity of this optimal treatment varies across the family and depends on the policy. We then show that the average simple regret of the policy over this family is at least $\sqrt{\frac{n}{T}} \sum_{j \in [k]} \frac{p_j}{\sqrt{q_j}},$ where $q_j$ is the fraction of times the policy samples subpopulation $j$, averaged over this family of instances. Hence, the worst-case regret is at least of the same order. Optimizing this lower bound over allocations yields the choice $q_j \propto p_j^{2/3}$ and gives the minimax lower bound of order${\small \sqrt{\frac{n}{T}}\, \lpnorm{\mathbf p}{2/3}}$. The full proof alongside all the omitted proofs is deferred to Appendix \ref{apx: proofs}. \qed

We now show that, for history-free policies, the worst-case simple regret depends solely on the expected allocation proportions across subpopulations, namely on the vector $\qpip$. The following characterization is tight up to constant factors.

\begin{restatable}[Simple Regret of History-Free Policies]{lemma}{rewardfreeSimpleRegret}\label{lem:history-free simple regret}
For any history-free policy $\pi \in \Piind{hf}(\mathbf p)$ along with the subroutine $\algsr$ and expected subpopulation proportions $\qpip$, if $\forall j \in [k] :  T \qjpip > 24 \, \ln(2kT)$, then
\begin{align*}
    \srtpi \in \Theta\Big(\sqrt{\frac{n}{T}} \sum_{j \in [k]} \frac{p_j}{\sqrt{\qjpip}}\Big).
\end{align*}
\end{restatable}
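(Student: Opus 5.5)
We prove matching upper and lower bounds, both driven by the random number of visits $N_j$ to subpopulation $j$. Because $\pi$ is history-free, the sequence $(C_t)_{t\le T}$ is generated independently of the rewards; it depends only on $\mathbf p$ and the policy's internal randomness. Hence $N_j=\sum_{t}\mathbbm{1}\{C_t=j\}$ is independent of the within-subpopulation reward process, and $\ebb[N_j]=T\qjpip$. Conditional on $N_j=m$, the copy of $\algsr$ in subpopulation $j$ runs a standard $n$-armed bandit for $m$ rounds. By Lemma~\ref{lem:simple_regret} and anytimeness, $\ebb[\Delta_j\mid N_j=m]\le c\sqrt{n/m}$ for $m\ge 1$, and $\ebb[\Delta_j\mid N_j=0]\le 1$. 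We can take the supremum over $\bnu$ coordinatewise, since subpopulations share no structure.

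\textbf{Upper bound.} The main obstacle is concentration of $N_j$ around $T\qjpip$ without any independence assumption across rounds. Under a history-free rule the indicators $\mathbbm{1}\{C_t=j\}$ are independent in the passive case, and in a general history-free scheme they are either deterministic or independent given a fixed randomization. In either case $N_j$ is a sum of independent Bernoullis with mean $T\qjpip$. The multiplicative Chernoff bound then gives
\[
\pr\big(N_j< T\qjpip/2\big)\le \exp\big(-T\qjpip/8\big)\le (2kT)^{-3},
\]
using the assumption $T\qjpip>24\ln(2kT)$. If the history-free rule involves arbitrary correlated internal randomization, I would restrict to (or argue WLOG for) rules with independent draws across rounds, since these are the rules realized by the algorithms in the paper.

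We then split on this event:
\[
\ebb[\Delta_j]\le c\sqrt{\tfrac{2n}{T\qjpip}}+(2kT)^{-3}.
\]
Summing with weights $p_j$, the error term totals at most $(2kT)^{-3}$. This is dominated by $\sqrt{n/T}\sum_j p_j/\sqrt{\qjpip}\ge\sqrt{n/T}$, which gives the $O(\cdot)$ direction.

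\textbf{Lower bound.} We reuse the hard family from the proof of Lemma~\ref{lem: lower bound}. In each subpopulation the arms are Bernoulli$(1/2)$ except for two candidate arms separated by a gap. Because allocations here do not depend on $\bnu$, the family-averaged proportion equals $\qjpip$ exactly. The bound $\sqrt{n/T}\sum_j p_j/\sqrt{q_j}$ obtained there then applies directly with $q_j=\qjpip$. Alternatively, one can argue per subpopulation: apply the minimax lower bound $\Omega(\sqrt{n/m})$ conditionally on $N_j=m$, then use Jensen's inequality, since $m\mapsto 1/\sqrt m$ is convex, to get $\ebb[1/\sqrt{N_j}]\ge 1/\sqrt{\ebb N_j}$. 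The $m=0$ case should be handled by replacing $1/\sqrt{m}$ with $\min(1,\cdot)$, which remains fine under the condition on $T\qjpip$. Weighting by $p_j$ and summing yields the $\Omega(\cdot)$ direction, completing $\Theta$.
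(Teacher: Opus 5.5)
Your argument is essentially the paper's. The upper bound uses the same Chernoff concentration of $N_j$ around $T\qjpip$, the anytime $\sqrt{n/m}$ guarantee on the good event, and a bad-event term absorbed because $\sum_j p_j/\sqrt{\qjpip}\ge 1$. The lower bound reuses the hard family of Lemma~\ref{lem: lower bound} with $\bar q_j=\qjpip$, since a history-free allocation does not depend on the instance.

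Your caveat about internal randomization is a real restriction, not a cosmetic one. A single shared coin that sends all $T$ rounds to one context gives $\mathbf q=(1/2,1/2)$ but constant worst-case regret. So you must restrict to independent per-round draws rather than argue it WLOG. The paper assumes this implicitly when it writes $T_j$ as a sum of independent Bernoullis.

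Drop the alternative lower-bound route. The $\Omega(\sqrt{n/m})$ minimax bound is attained by an instance whose gap depends on $m$. Applying it conditionally on $N_j=m$ and then averaging over $N_j$ therefore exchanges $\sup_{\bnu}$ and $\ebb$ in the invalid direction. In addition, the capped function $\min(1,\cdot)$ you propose for $m=0$ is no longer convex, so the Jensen step also fails. The hard-family argument avoids both problems, because its KL bound is linear in the expected pull counts.
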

This shows that, for history-free policies, the performance depends solely on the quantity 
$\sum_{j \in [k]} p_j/\sqrt{\qjpip}$. Therefore, for any $\mathbf q \in \delkplus$, we define
\begin{align}\label{eq:spq}
    S_{\mathbf p}(\mathbf q) \coloneqq \sum_{j \in [k]} \frac{p_j}{\sqrt{q_j}}.
\end{align}
The following lemma identifies the optimal active allocation that minimizes this quantity.

\begin{restatable}[Optimal Allocation for Active Policies]{lemma}{optimalQSp}\label{lem:optimal_q_sp}
For any $\mathbf p \in \delkplus$, the unique minimizer of $S_{\mathbf p}(\mathbf q)$ over $\mathbf q \in \delkplus$ is
$$
q^*(\mathbf p)_j
=
\frac{p_j^{2/3}}{\sum_{\ell \in [k]} p_\ell^{2/3}},
\quad j \in [k].
$$
Moreover,
$
S_{\mathbf p}(\mathbf q^*(\mathbf p)) = \lpnorm{\mathbf p}{2/3}.
$
\end{restatable}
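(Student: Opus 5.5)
We will establish the claimed lower bound $S_{\mathbf p}(\mathbf q)\ge \lpnorm{\mathbf p}{2/3}$ directly via Hölder's inequality, and then read off uniqueness from its equality case. Write $p_j^{2/3} = \big(p_j q_j^{-1/2}\big)^{2/3} q_j^{1/3}$. Apply Hölder with exponents $3/2$ and $3$ to the sequences $a_j = (p_j q_j^{-1/2})^{2/3}$ and $b_j = q_j^{1/3}$:
\begin{align*}
\sum_{j\in[k]} p_j^{2/3} \le \Big(\sum_{j\in[k]} \frac{p_j}{\sqrt{q_j}}\Big)^{2/3}\Big(\sum_{j\in[k]} q_j\Big)^{1/3} = S_{\mathbf p}(\mathbf q)^{2/3},
\end{align*}
using $\sum_j q_j = 1$. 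Raising both sides to the power $3/2$ gives $S_{\mathbf p}(\mathbf q) \ge \big(\sum_j p_j^{2/3}\big)^{3/2} = \lpnorm{\mathbf p}{2/3}$ for every $\mathbf q\in\delkplus$.

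Next we check that $\mathbf q^*(\mathbf p)$ attains this value. Set $Z=\sum_\ell p_\ell^{2/3}$, so that $q^*_j = p_j^{2/3}/Z$. Then $p_j/\sqrt{q^*_j} = p_j^{2/3} Z^{1/2}$, and summing over $j$ gives $S_{\mathbf p}(\mathbf q^*) = Z^{3/2} = \lpnorm{\mathbf p}{2/3}$. Since $p_j>0$ for all $j$, we have $\mathbf q^*\in\delkplus$. Hence $\mathbf q^*$ is a minimizer, and the minimum value is $\lpnorm{\mathbf p}{2/3}$.

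Uniqueness follows from the equality condition in Hölder's inequality. Because all entries are strictly positive, equality holds iff $a_j^{3/2}$ and $b_j^{3}$ are proportional, i.e. $p_j q_j^{-1/2} = c\, q_j$ for some constant $c>0$ and all $j$. This is equivalent to $q_j \propto p_j^{2/3}$, and normalizing over the simplex forces $\mathbf q = \mathbf q^*(\mathbf p)$.

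As a cross-check, $S_{\mathbf p}$ is strictly convex on $\delkplus$: each term $p_j q_j^{-1/2}$ is strictly convex in $q_j$ since $p_j>0$, and the sum is separable. Moreover $S_{\mathbf p}$ blows up at the boundary of the simplex. So a Lagrangian stationarity condition $-\tfrac12 p_j q_j^{-3/2} = \lambda$ also identifies the unique minimizer $q_j\propto p_j^{2/3}$. There is no real obstacle in this argument; the only point requiring care is stating the Hölder equality case correctly, which is exactly what delivers uniqueness.
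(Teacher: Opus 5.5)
Your proof is correct, but your main argument differs from the paper's; your closing cross-check is essentially the paper's proof.

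The paper proceeds in three steps:
\begin{itemize}
\item strict convexity of $S_{\mathbf p}$ gives uniqueness;
\item the blow-up of $p_j/\sqrt{q_j}$ as $q_j \downarrow 0$ places the minimizer in the interior;
\item the Lagrangian condition $-\tfrac12 p_j q_j^{-3/2}+\lambda=0$ forces $q_j = c\,p_j^{2/3}$, and the value $\lpnorm{\mathbf p}{2/3}$ then follows by substitution.
\end{itemize}

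Your H\"older step with exponents $3/2$ and $3$ instead gives a global certificate $S_{\mathbf p}(\mathbf q)\ge \lpnorm{\mathbf p}{2/3}$ for every $\mathbf q\in\delkplus$. Attainment at $\mathbf q^*(\mathbf p)$ then proves optimality at once. The equality case (proportionality of $p_j q_j^{-1/2}$ and $q_j$, with all entries positive) proves uniqueness.

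What your route buys:
\begin{itemize}
\item It is fully self-contained. You never need existence of a minimizer or sufficiency of a stationary point. The paper handles these only implicitly: it rules out the boundary but does not separately argue (e.g.\ via compactness of $\delk$ and lower semicontinuity) that a minimum exists.
\item Your inequality is exactly the bound invoked at the last step of the proof of Lemma~\ref{lem: lower bound}.
\end{itemize}

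What the paper's KKT route buys in return is uniformity across the paper. The same multiplier calculation, with inequality multipliers added, yields the threshold form $\max\big((1-\alpha)p_j,\, c^* p_j^{2/3}\big)$ for the budgeted problem in Theorem~\ref{thm:budget}. There the optimizer is no longer proportional to $p_j^{2/3}$ on every coordinate, so a single H\"older equality case does not describe it.
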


Combining Lemmas~\ref{lem: lower bound}, \ref{lem:history-free simple regret}, and \ref{lem:optimal_q_sp} gives the following optimality result, which states that history-free policies achieve the same regret as general policies that can select the subpopulation allocation based on the history.

\begin{proposition}[Optimality of History-Free Policies]\label{cor:reduction}
For every $\mathbf p \in \delkplus$, \begin{align*} \inf_{\pi \in \Piind{hf}(\mathbf p)} \srtpi \in \Theta\Big(\inf_{\pi \in \Pi(\mathbf p)} \srtpi\Big). \end{align*} That is, there is no optimality penalty for adhering to history-free policies.\end{proposition}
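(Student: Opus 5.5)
The plan is to sandwich both infima between the same quantity $\sqrt{n/T}\,\lpnorm{\mathbf p}{2/3}$ up to constants. One direction is immediate from inclusion: since $\Piind{hf}(\mathbf p) \subseteq \Pi(\mathbf p)$, we have $\inf_{\pi \in \Pi(\mathbf p)} \srtpi \le \inf_{\pi \in \Piind{hf}(\mathbf p)} \srtpi$. Thus it suffices to show that the history-free infimum is $\ocal\big(\sqrt{n/T}\,\lpnorm{\mathbf p}{2/3}\big)$. Lemma~\ref{lem: lower bound} then gives that the general infimum is $\Omega\big(\sqrt{n/T}\,\lpnorm{\mathbf p}{2/3}\big)$, and chaining the two bounds yields the $\Theta$ relation.

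For the upper bound, I would exhibit a concrete history-free policy $\pi^*$. At every round it actively selects subpopulation $j$ with probability $q^*(\mathbf p)_j \propto p_j^{2/3}$, independently of the history; one could equally use a deterministic round-robin schedule realizing these frequencies up to rounding. Within each subpopulation it runs an independent copy of $\algsr$. By construction $\mathbf q(\pi^*,\mathbf p) = \mathbf q^*(\mathbf p)$. Lemma~\ref{lem:history-free simple regret} then gives $\mathrm{SR}(\pi^*,T,\mathbf p) \in \ocal\big(\sqrt{n/T}\, S_{\mathbf p}(\mathbf q^*(\mathbf p))\big)$, and Lemma~\ref{lem:optimal_q_sp} evaluates $S_{\mathbf p}(\mathbf q^*(\mathbf p)) = \lpnorm{\mathbf p}{2/3}$.

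The only step that needs care is verifying the hypothesis of Lemma~\ref{lem:history-free simple regret}, namely $T q^*(\mathbf p)_j > 24\ln(2kT)$ for all $j$. I would bound
\[
q^*(\mathbf p)_j \ge \frac{\pmin^{2/3}}{\sum_\ell p_\ell^{2/3}} \ge \frac{\pmin^{2/3}}{k^{1/3}},
\]
where the last step uses Hölder's inequality, $\sum_\ell p_\ell^{2/3} \le k^{1/3}$. The condition therefore reduces to $T \gtrsim k^{1/3}\pmin^{-2/3}\ln(kT)$, which holds in the data-rich regime $T \gg n, k, 1/\pmin$ that the paper assumes throughout.

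I expect this to be the only mild obstacle. If one insists on the statement for all $T$ rather than only in the data-rich regime, the small-$T$ case can be handled separately. Regret is at most $1$, while the lower bound $\sqrt{n/T}\,\lpnorm{\mathbf p}{2/3} \ge \sqrt{n/T}$ (since $\lpnorm{\mathbf p}{2/3} \ge \lpnorm{\mathbf p}{1} = 1$) is already of constant order whenever $T$ is comparable to $n$. For intermediate $T$, I would instead rely on the standing assumption of the data-rich regime.
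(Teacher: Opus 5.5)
Your proposal is correct and follows the paper's route. The paper obtains the proposition by combining the general lower bound of Lemma~\ref{lem: lower bound} with the upper bound that Lemmas~\ref{lem:history-free simple regret} and~\ref{lem:optimal_q_sp} give for the history-free allocation $q_j \propto p_j^{2/3}$, with the inclusion $\Piind{hf}(\mathbf p)\subseteq\Pi(\mathbf p)$ closing the sandwich. Your check of the hypothesis of Lemma~\ref{lem:history-free simple regret} via $q^*(\mathbf p)_j \ge \pmin^{2/3}k^{-1/3}$ is valid and slightly sharper than the bound $q^*(\mathbf p)_j \ge \pmin$ used in the analogous step for Algorithm~\ref{algo:budget}; both reduce to the standing data-rich assumption.
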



\subsection{Gap Between Active and Passive Policies} 

We now compare the best passive and active policies within the class of history-free policies that know $\mathbf p$. Under any passive policy, the subpopulation observed at each round is sampled from $\mathbf p$, and hence $\qpip=\mathbf p$. Lemma~\ref{lem:history-free simple regret} gives
\begin{align}\label{eq:passive regret bound}
    \inf_{\pi \in \Piind{pas}(\mathbf p)} \srtpi
    \in
    \Theta\Big(
        \sqrt{\frac{n}{T}}
        \sum_{j \in [k]} \sqrt{p_j}
    \Big)
    =
    \Theta\Big(
        \sqrt{\frac{n}{T}}\, \lpnorm{\mathbf p}{1/2}^{1/2}
    \Big).
\end{align}

\begin{figure}[t]
    \centering
    \begin{subfigure}[b]{0.47\textwidth}
        \centering
        \includegraphics[width=\textwidth]{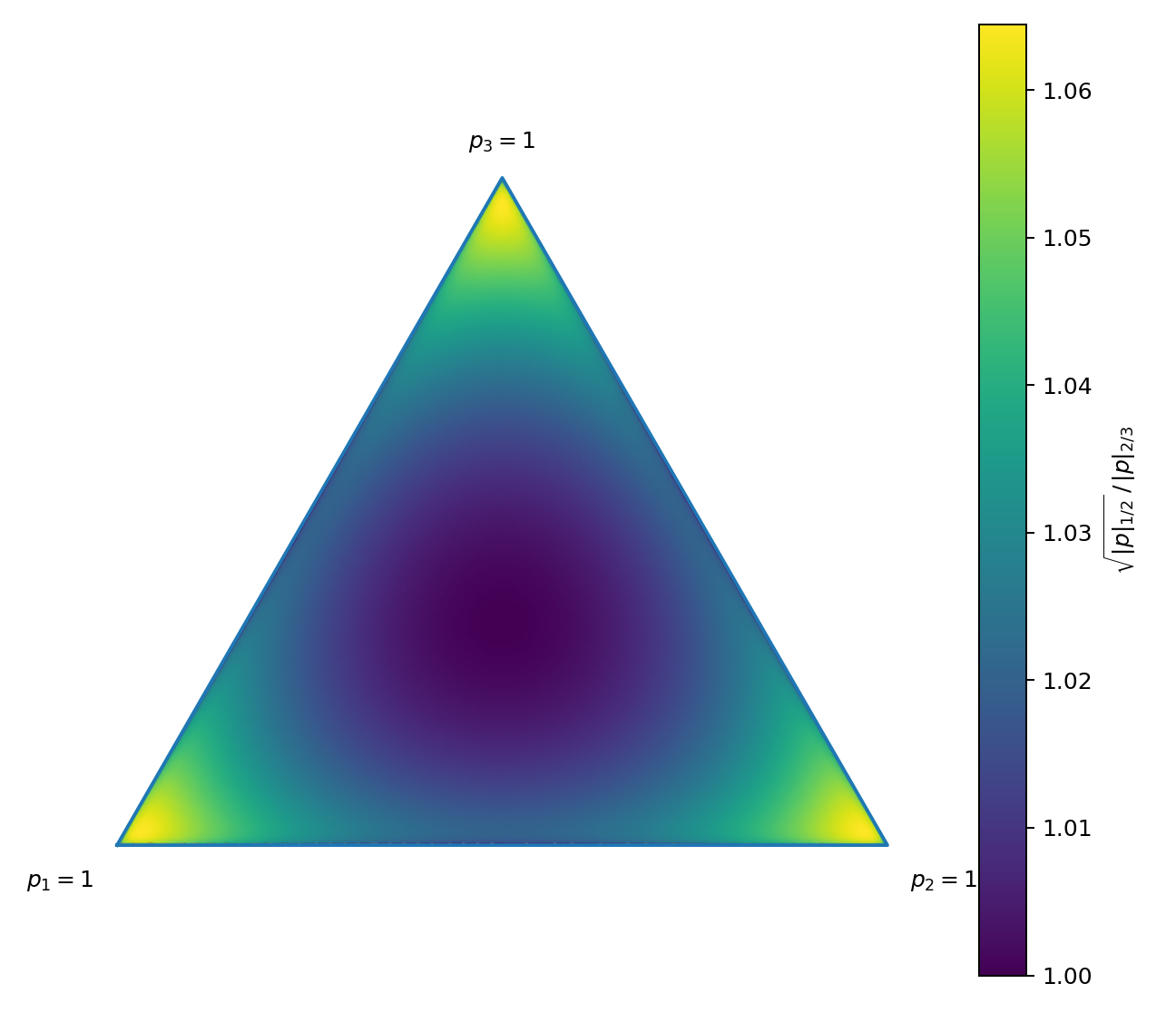}
        \caption{Heatmap of $R(\mathbf p)$ on the $3$-simplex.}
    \end{subfigure}
    \hspace{1cm}
    \begin{subfigure}[b]{0.37\textwidth}
        \centering
        \includegraphics[width=\textwidth]{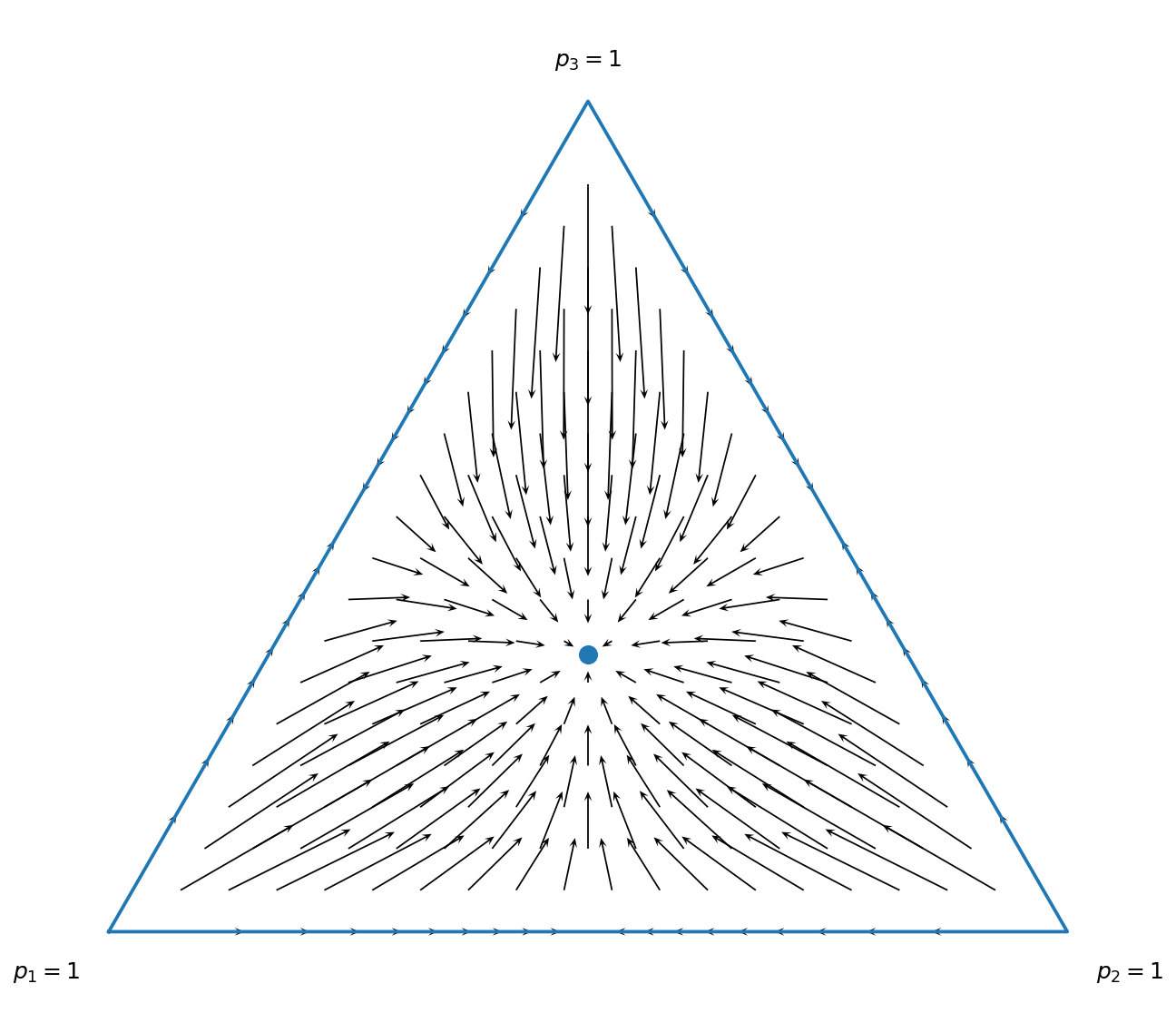}
        \caption{Flow from $\mathbf p$ to the optimal active allocation $\mathbf q^*(\mathbf p)$.}
    \end{subfigure}
    \caption{Illustration of the active--passive gap on the probability simplex for $k=3$. Left: heatmap of $R(\mathbf p)$ over the simplex, showing that the gap is negligible near the uniform distribution (center) and increases for non-uniform population profiles (corners). 
    Right: for each point $\mathbf p$, the arrow points toward the optimal active allocation $\mathbf q^*(\mathbf p)$, illustrating that the optimal active policy shifts the sampling proportions toward a more uniform (center), and hence more exploratory, allocation.}
    \label{fig:active_passive_simplex}
\end{figure}

On the other hand, Lemmas~\ref{lem:history-free simple regret} and~\ref{lem:optimal_q_sp} imply that the best active history-free policy satisfies
\begin{align}\label{eq:active regret bound}
    \inf_{\pi \in \Piind{act}(\mathbf p) \cap \Piind{hf}(\mathbf p)} \srtpi
    \in
    \Theta\Big(
        \sqrt{\frac{n}{T}}\, \lpnorm{\mathbf p}{2/3}
    \Big).
\end{align}
This rate is achieved by sampling subpopulation $j$ proportionally to $p_j^{2/3}$ and running $\algsr$ within each subpopulation. Compared to the passive allocation $\mathbf p$, the optimal active allocation is closer to uniform: it downweights large coordinates of $\mathbf p$ and upweights small ones, leading to a more exploratory allocation. See Figure~\ref{fig:active_passive_simplex} for an illustration.

The multiplicative active--passive gap is therefore
$
R(\mathbf p) \coloneqq {\sqrt{\lpnorm{\mathbf p}{1/2}}}/{\lpnorm{\mathbf p}{2/3}} 
$
and \eqref{eq:passive regret bound}--\eqref{eq:active regret bound} imply
\begin{align*}
    \frac{\inf_{\pi \in \Piind{pas}(\mathbf p)} \srtpi}
    {\inf_{\pi \in \Piind{act}(\mathbf p)\cap\Piind{hf}(\mathbf p)} \srtpi}
    \in
    \Theta\big(R(\mathbf p)\big).
\end{align*}

\begin{restatable}[Active--Passive Gap]{lemma}{activePassiveGap}\label{lem:active_passive_gap}
For every $\mathbf p \in \delkplus$,
$
R(\mathbf p) \geq 1.
$
Moreover,
$
\sup_{\mathbf p \in \delkplus} R(\mathbf p) \in \Theta\big(k^{\frac14}\big),
$
and this order is tight.
\end{restatable}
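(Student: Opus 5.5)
The lower bound $R(\mathbf p)\ge 1$ is an optimality comparison, and the upper bound is a norm interpolation. The tightness of the $k^{1/4}$ order is shown by an explicit family of $\mathbf p$.

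\emph{Step 1 ($R\ge 1$).} Write $\sqrt{\lpnorm{\mathbf p}{1/2}}=\sum_j \sqrt{p_j}=S_{\mathbf p}(\mathbf p)$. By Lemma~\ref{lem:optimal_q_sp}, $\lpnorm{\mathbf p}{2/3}=\min_{\mathbf q\in\delkplus} S_{\mathbf p}(\mathbf q)\le S_{\mathbf p}(\mathbf p)$, so $R(\mathbf p)\ge 1$.

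\emph{Step 2 (upper bound $R(\mathbf p)\le k^{1/4}$).} I will bound numerator and denominator separately.
\begin{itemize}
\item For the numerator, Cauchy--Schwarz gives $\sum_j\sqrt{p_j}\le \sqrt{k}\,\big(\sum_j p_j\big)^{1/2}=\sqrt k$.
\item That alone only gives $\sqrt k$, so the ratio must be treated jointly. Hölder's inequality gives log-convexity of $r\mapsto \sum_j p_j^{r}$: since $1/2$ is a convex combination of $2/3$ and $0$, namely $1/2=\tfrac34\cdot\tfrac23+\tfrac14\cdot 0$, we get
\[
\sum_j p_j^{1/2}\le \Big(\sum_j p_j^{2/3}\Big)^{3/4}k^{1/4}.
\]
\item Also $\lpnorm{\mathbf p}{2/3}=\big(\sum_j p_j^{2/3}\big)^{3/2}$.
\end{itemize}
Setting $A=\sum_j p_j^{2/3}\ge 1$, these give $R(\mathbf p)\le k^{1/4}A^{3/4}/A^{3/2}=k^{1/4}A^{-3/4}\le k^{1/4}$.

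\emph{Step 3 (matching lower bound).} The upper bound is loose unless $A\approx 1$, i.e.\ $\mathbf p$ is near a vertex, while the numerator must stay large. I will choose $p_1=1-\epsilon$ and $p_j=\epsilon/(k-1)$ for $j\ge 2$. Then:
\begin{itemize}
\item $\sum_j\sqrt{p_j}\approx 1+\sqrt{\epsilon k}$;
\item $A\approx 1+\epsilon^{2/3}k^{1/3}$, so $\lpnorm{\mathbf p}{2/3}\approx (1+\epsilon^{2/3}k^{1/3})^{3/2}$.
\end{itemize}
Taking $\epsilon=1/\sqrt{k}$ gives numerator $\sqrt{\sum_j\sqrt{p_j}}$ of order $\sqrt{1+k^{1/4}}\asymp k^{1/8}$. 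With $\epsilon^{2/3}k^{1/3}=1$, the denominator is $\Theta(1)$, so $R\asymp k^{1/8}$.

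This is below $k^{1/4}$, which means I must recheck the definition. The paper's $R(\mathbf p)=\sqrt{\lpnorm{\mathbf p}{1/2}}/\lpnorm{\mathbf p}{2/3}$ with $\sqrt{\lpnorm{\mathbf p}{1/2}}=\sum_j\sqrt{p_j}$, so the numerator is $\sum_j\sqrt{p_j}\approx 1+\sqrt{\epsilon k}=1+k^{1/4}$. Hence $R\asymp k^{1/4}$, matching Step 2. A safer choice is to optimize $\epsilon$ in $(1+\sqrt{\epsilon k})/(1+\epsilon^{2/3}k^{1/3})^{3/2}$; any $\epsilon\asymp k^{-1/2}$ works.

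The main obstacle is Step 2: the naive separate bounds give only $\sqrt k$. The fix is the interpolation exponent $\theta=3/4$ between $r=2/3$ and $r=0$, combined with the normalization $A\ge 1$, which follows from $p_j^{2/3}\ge p_j$. Step 3 then only requires asymptotics with constants tracked loosely.
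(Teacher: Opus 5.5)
Your proof is correct. Steps 2 and 3 coincide with the paper's argument. Your H\"older bound $\sum_j \sqrt{p_j}\le k^{1/4}\big(\sum_j p_j^{2/3}\big)^{3/4}$ is exactly the paper's comparison $\lpnorm{\mathbf p}{1/2}\le k^{1/2}\lpnorm{\mathbf p}{2/3}$ after taking square roots, and both proofs then use $\sum_j p_j^{2/3}\ge 1$. The lower-bound family $(1-\epsilon,\epsilon/(k-1),\dots,\epsilon/(k-1))$ with $\epsilon$ of order $k^{-1/2}$ is also the paper's.

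The genuine difference is Step 1. The paper proves $R(\mathbf p)\ge 1$ by norm interpolation on $a_j=\sqrt{p_j}$: it uses $\lpnorm{\mathbf a}{4/3}\le \lpnorm{\mathbf a}{1}^{1/2}\lpnorm{\mathbf a}{2}^{1/2}$ with $\lpnorm{\mathbf a}{2}=1$, which is a standalone inequality. You instead note that $\sqrt{\lpnorm{\mathbf p}{1/2}}=S_{\mathbf p}(\mathbf p)$ and invoke Lemma~\ref{lem:optimal_q_sp}. Your route is shorter and exposes the actual reason: the passive allocation $\mathbf q=\mathbf p$ is one feasible active allocation. Through the uniqueness part of that lemma, it also yields the equality case for free: $R(\mathbf p)=1$ if and only if $\mathbf p=\mathbf q^*(\mathbf p)$, i.e.\ if and only if $\mathbf p$ is uniform. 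The paper only asserts the ``if'' direction. The price is relying on the optimization lemma rather than a self-contained inequality.

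In Step 3, your choice $\epsilon=k^{-1/2}$ is slightly cleaner than the paper's $(k-1)^{-1/2}$, which equals $1$ at $k=2$ and leaves the open simplex. In the final write-up, drop the $k^{1/8}$ false start and replace the $\approx$ signs by explicit bounds. For $k\ge 2$ you have $\sum_j\sqrt{p_j}\ge\sqrt{\epsilon(k-1)}\ge k^{1/4}/\sqrt{2}$ and $\sum_j p_j^{2/3}=(1-\epsilon)^{2/3}+((k-1)/k)^{1/3}\le 2$. Together these give $R(\mathbf p)\ge k^{1/4}/4$.
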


The inequality $R(\mathbf p) \geq 1$ implies that active policies are always at least as good as passive policies. When $\mathbf p$ is uniform, $R(\mathbf p)=1$, so the two rates coincide; more generally, the gain is limited near uniform distributions. The largest gap arises for highly non-uniform $\mathbf p$,  in particular when one subpopulation has a large mass and the remaining coordinates are all equal and of order $k ^ {-3/2}$.

\subsection{Budgeted Interventions}\label{sec:budget}

In many practical settings, the agent has a limited budget of active rounds, in which it can choose both the subpopulation and the treatment. We focus on such budgeted setting in this section. More precisely, we consider policies that are allowed to actively select the subpopulation in at most $\alpha T$ of the total $T$ rounds, where $\alpha \in [0,1]$. We assume the distribution $\mathbf p$ is known to the agent, and denote this class of policies by $\Piind{\alpha}(\mathbf p)$. Note that $\Piind{\alpha=0}(\mathbf p)=\Piind{pas}(\mathbf p)$ and $\Piind{\alpha=1}(\mathbf p)=\Piind{act}(\mathbf p)$, so this model interpolates between the passive and fully active settings.

Our algorithm design relies on the result of Lemma~\ref{lem:history-free simple regret}, which states that the regret of a history-free policy $\pi$ depends on the subpopulation proportions $\qpip$ only through the quantity
$S_{\mathbf p}(\mathbf q)$ defined in \eqref{eq:spq}.
Thus, the goal is to choose the allocation vector $\mathbf q$ to minimize this term. Given that the subpopulation cannot be selected by the agent during the $(1-\alpha)T$ rounds but must instead be sampled according to $\mathbf p$, any policy in $\Piind{\alpha}(\mathbf p)$ must satisfy 
$\qjpip \geq (1-\alpha)p_j$, $\forall j \in [k]$.
Therefore, the best achievable allocation is given by the solution of the following constrained optimization problem,
\begin{align} \label{eq:budget optimization}
    \min_{\mathbf q \in \delkplus} S_{\mathbf p}(\mathbf q)=\min_{\mathbf q \in \delkplus} \sum_{j \in [k]} \frac{p_j}{\sqrt{q_j}},\qquad  
    \text{s.t.} \quad q_j \geq (1-\alpha)p_j, \quad \forall j \in [k].
\end{align}
This optimization problem is convex and its solution admits a simple threshold structure. Let $\qstalphap$ and $\vstalphap$ denote the optimizer and the optimal value of the program, respectively. In particular, there exists a constant $c^* \geq 0$ such that
$
\qstalphap_j = \max\big((1-\alpha)p_j,\; c^* p_j^{2/3}\big), j \in [k].
$
Thus, the subpopulations are partitioned into two groups. For those subpopulations whose coordinates satisfy $\qstalphap_j > (1-\alpha)p_j$, the optimal allocation remains proportional to $p_j^{2/3}$, whereas for subpopulations satisfying $\qstalphap_j = (1-\alpha)p_j$, the passive rounds already provide a sufficient number of samples, so there is no need to actively allocate additional pulls to them. Algorithm~\ref{algo:budget}, which is based on this characterization, summarizes the steps of the budgeted intervention setting.

\begin{algorithm}[t]
    \caption{$\alpha$-Active Simple Regret Algorithm}
    \label{algo:budget}
    \begin{algorithmic}[1]
        \STATE \textbf{Input:} Standard bandit subroutine $\algsr$, distribution $\mathbf p$, budget $\alpha$, and horizon $T$.
        \STATE Initialize one instance $\algsr^j$ of $\algsr$ for each subpopulation $j \in [k]$.
        \STATE Define $F(c) = \sum_{j \in [k]} \max \big((1-\alpha)p_j,\; c p_j^{2/3}\big)$.
        \FOR{$t = 1,2,\dots,\lfloor(1-\alpha)T\rfloor$}
            \STATE Sample $C_t \sim \mathbf p$, and choose the treatment $A_t$ recommended by $\algsr^{C_t}$.
        \ENDFOR
        \STATE Find the unique value $c^*$ such that $F(c^*) = 1$.
        \STATE For every $j \in [k]$, set
        $
        q_j^* = \max \big((1-\alpha)p_j,\; c^* p_j^{2/3}\big),$
        and 
        $
        r_j = \frac{q_j^* - (1-\alpha)p_j}{\alpha}.
        $
        \FOR{$t = \lfloor(1-\alpha)T\rfloor + 1,\dots,T$}
            \STATE Sample subpopulation $C_t$ according to $\mb{r}$, and choose the treatment $A_t$ recommended by $\algsr^{C_t}$.
        \ENDFOR
        \STATE \textbf{Output:} For each subpopulation $j$, output the recommended arm $\ahatj$ returned by $\algsr^j$.
    \end{algorithmic}
\end{algorithm}

\begin{restatable}[Simple Regret of Algorithm~\ref{algo:budget}]{theorem}{budget}\label{thm:budget}
For every $\mathbf p \in \delkplus$, Algorithm~\ref{algo:budget} is a history-free policy in $\Piind{\alpha}(\mathbf p)$ with expected subpopulation proportions equal to $\qstalphap$. Consequently, when $T$ is large, its worst-case simple regret satisfies the optimal rate
\begin{equation*}
    {\text{SR}(\texttt{Alg.1}, T, \mathbf p)}
    \in \ocal\big(\sqrt{\frac{n}{T}} \, \, \vstalphap\big).
\end{equation*}
\end{restatable}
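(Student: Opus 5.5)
The plan has four steps: show that Algorithm~\ref{algo:budget} is well defined, show that it belongs to $\Piind{\alpha}(\mathbf p)$ and is history-free, compute its expected allocation, and then apply Lemma~\ref{lem:history-free simple regret}.

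\textbf{Well-definedness.} First I will check that the function $F(c)=\sum_j \max((1-\alpha)p_j, c\,p_j^{2/3})$ has the required properties on $c\ge 0$.
\begin{itemize}
\item $F$ is continuous and nondecreasing.
\item $F(0)=1-\alpha\le 1$ and $F(c)\to\infty$ as $c\to\infty$, so a root $c^*$ of $F(c^*)=1$ exists.
\item For $\alpha>0$, $F$ is strictly increasing wherever $F(c)\ge 1-\alpha$ with at least one coordinate active. This holds near the root, so $c^*$ is unique.
\end{itemize}
The case $\alpha=0$ is degenerate: then $\mathbf q^*=\mathbf p$, and the algorithm is simply passive, with $c^*$ taken as any value for which all maxima equal $p_j$.

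Next I will verify that $\mathbf r$ is a valid distribution. Since $q^*_j\ge(1-\alpha)p_j$, we have $r_j\ge 0$. Also $\sum_j r_j=(F(c^*)-(1-\alpha))/\alpha=1$.

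\textbf{Membership in the policy class and history-freeness.} The subpopulation selection rule uses only $\mathbf p$, $\alpha$ and $T$, never rewards, so the policy is history-free. Only the last $T-\lfloor(1-\alpha)T\rfloor\le \alpha T+1$ rounds are active. The off-by-one from flooring can be absorbed, for instance by making one more round passive, so the policy lies in $\Piind{\alpha}(\mathbf p)$.

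\textbf{Expected allocation.} The expected number of visits to subpopulation $j$ is $\lfloor(1-\alpha)T\rfloor p_j+(T-\lfloor(1-\alpha)T\rfloor)r_j$. Dividing by $T$ gives $(1-\alpha)p_j+\alpha r_j=q^*_j$, up to an $O(1/T)$ rounding term. This term only perturbs each $q_j$ by a factor $1+o(1)$, since $q^*_j\ge c^*p_j^{2/3}>0$ or $q^*_j\ge(1-\alpha)p_j>0$.

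I will also argue that $\mathbf q^*_\alpha(\mathbf p)$ defined this way is the optimizer of \eqref{eq:budget optimization}. The objective $S_{\mathbf p}$ is strictly convex and the constraints are linear, so KKT conditions are sufficient. Take multiplier $\lambda$ for $\sum_j q_j=1$ and $\eta_j\ge0$ for $q_j\ge(1-\alpha)p_j$. Stationarity reads $\tfrac12 p_j q_j^{-3/2}=\lambda-\eta_j$.
\begin{itemize}
\item On inactive coordinates ($\eta_j=0$), this gives $q_j=c\,p_j^{2/3}$ with $c=(2\lambda)^{-2/3}$.
\item On active coordinates, $q_j=(1-\alpha)p_j\ge c\,p_j^{2/3}$, because $\eta_j\ge0$ forces $\tfrac12p_jq_j^{-3/2}\le\lambda$.
\end{itemize}
This yields exactly the max formula, with $c$ fixed by normalization, i.e.\ $c=c^*$.

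\textbf{Regret bound.} I will apply Lemma~\ref{lem:history-free simple regret}. Its hypothesis is $Tq_j>24\ln(2kT)$ for all $j$. This holds for large $T$, because $q^*_j\ge(1-\alpha)p_j$ when $\alpha<1$, and $q^*_j\ge c^*p_j^{2/3}$ in general; both are positive constants independent of $T$. This is what "when $T$ is large" refers to. The lemma then gives $\text{SR}\in O\big(\sqrt{n/T}\,S_{\mathbf p}(\mathbf q^*)\big)=O\big(\sqrt{n/T}\,v^*_\alpha(\mathbf p)\big)$.

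\textbf{Optimality of the rate.} Any history-free policy in $\Piind{\alpha}(\mathbf p)$ has an allocation that is feasible for \eqref{eq:budget optimization}. Combined with the lower bound of Lemma~\ref{lem:history-free simple regret}, this shows that $v^*_\alpha(\mathbf p)$ is the optimal rate within that class.

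The main obstacle is the bookkeeping: uniqueness of $c^*$, the KKT identification of the threshold form, and the rounding effects. The core regret bound is a direct application of Lemma~\ref{lem:history-free simple regret}.
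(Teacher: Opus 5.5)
Your proposal is correct and follows essentially the same route as the paper: KKT gives the threshold form $\max((1-\alpha)p_j, c^*p_j^{2/3})$, monotonicity of $F$ gives a unique $c^*$, $\mathbf r$ is a valid distribution whose mixture with the passive phase yields expected allocation $\qstalphap$, and Lemma~\ref{lem:history-free simple regret} finishes the bound. The differences are cosmetic. The paper avoids rounding by assuming $\alpha T$ is an integer. It also makes ``$T$ large'' explicit by showing $c^*\ge p_{\min}^{1/3}$, hence $\min_j \qstalphapj \ge p_{\min}$, whereas you only note that the $q^*_j$ are $T$-independent positive constants. Finally, in your uniqueness step ``active'' must mean ``on the $c\,p_j^{2/3}$ branch,'' which is the opposite of its meaning in your KKT step.
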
 
Another question that arises naturally in this setting is that given a subpopulation distribution $\mathbf{p}$, what is the minimum intervention budget $\alpha$ required 
to match the fully active setting? Equivalently, how many active rounds are sufficient to attain the optimal fully active rate. We denote this threshold by $\alphmin$. The next lemma gives a closed-form expression for $\alphmin$ in terms of $\mathbf p$.

\begin{restatable}[Threshold Budget $\alphmin$]{lemma}{alphamin}\label{lem:alphamin}
For a given $\mathbf p \in \delkplus$, let
$
\alphmin \coloneqq 1 - {p_{\max}^{-1/3}}/{\sum_{j \in [k]} p_j^{2/3}},
$
where $p_{\max} = \max_j p_j$.
Then, for every $\alpha \in [\alphmin,1]$, we have
$
\vstalphap = \lpnorm{\mathbf p}{2/3}.
$
Consequently, for every such $\alpha$, the simple regret bound of Algorithm~\ref{algo:budget} matches the optimal rate of fully active policies, which is also the optimal rate over all policies.
\end{restatable}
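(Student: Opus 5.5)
The plan is to show that, for $\alpha \geq \alphmin$, the unconstrained minimizer $\mathbf q^*(\mathbf p)$ from Lemma~\ref{lem:optimal_q_sp} is feasible for the budgeted program \eqref{eq:budget optimization}. Adding constraints can only increase the minimum of $S_{\mathbf p}$ over $\delkplus$, so \eqref{eq:budget optimization} always satisfies $\vstalphap \geq \lpnorm{\mathbf p}{2/3}$. If $\mathbf q^*(\mathbf p)$ is feasible, the two minima coincide, and Lemma~\ref{lem:optimal_q_sp} gives $\vstalphap = S_{\mathbf p}(\mathbf q^*(\mathbf p)) = \lpnorm{\mathbf p}{2/3}$.

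For feasibility, we need $q^*(\mathbf p)_j \geq (1-\alpha)p_j$ for every $j$. Write $Z = \sum_\ell p_\ell^{2/3}$. The condition becomes
\[
\frac{p_j^{2/3}}{Z} \geq (1-\alpha)p_j ,
\]
which is equivalent to
\[
1-\alpha \leq \frac{p_j^{-1/3}}{Z}.
\]
The map $x \mapsto x^{-1/3}$ is decreasing, so the right-hand side is smallest at $p_j = p_{\max}$. Hence all $k$ constraints hold if and only if
\[
1-\alpha \leq \frac{p_{\max}^{-1/3}}{Z},
\]
that is, if and only if $\alpha \geq \alphmin$. As a sanity check, $\alphmin \in [0,1)$: Hölder's inequality (or the bound $p_j^{2/3} \geq p_j p_{\max}^{-1/3}$) gives $Z \geq p_{\max}^{-1/3}$, so the threshold is nonnegative. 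Indeed, summing that bound over $j$ gives $Z \geq p_{\max}^{-1/3}\sum_j p_j = p_{\max}^{-1/3}$. In threshold form this is consistent with $c^* = 1/Z$, since then $\max\big((1-\alpha)p_j, c^* p_j^{2/3}\big) = c^* p_j^{2/3}$ for every $j$.

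For the consequence, I would plug $\vstalphap = \lpnorm{\mathbf p}{2/3}$ into Theorem~\ref{thm:budget}. This gives the regret bound $\ocal\big(\sqrt{n/T}\,\lpnorm{\mathbf p}{2/3}\big)$ for Algorithm~\ref{algo:budget}. This bound matches the lower bound of Lemma~\ref{lem: lower bound}, which holds over all of $\Pi(\mathbf p)$, and also the active rate \eqref{eq:active regret bound}.

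The only potentially delicate point is that the uniqueness in Lemma~\ref{lem:optimal_q_sp} is stated over the open simplex. The feasible set of \eqref{eq:budget optimization} is a subset of $\delkplus$, so the minimum over the smaller set cannot fall below the global minimum. The argument therefore needs no further care.
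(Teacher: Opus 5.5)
Your proposal is correct and follows the paper's route. You show that $\mathbf q^*(\mathbf p)$ is feasible exactly when $\alpha \geq \alphmin$, since the binding constraint is at $p_{\max}$. You then sandwich $\vstalphap$ between the unconstrained optimum $\lpnorm{\mathbf p}{2/3}$ and $S_{\mathbf p}(\mathbf q^*(\mathbf p))$, and finish with Theorem~\ref{thm:budget} and Lemma~\ref{lem: lower bound}. Your check that $\alphmin \in [0,1)$ is a small addition the paper does not include.
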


\section{Unknown Subpopulation Distribution}\label{sec:unknown_p}

In this section, we study the setting in which the distribution of subpopulations, $\mb{p}$, is unknown to the learner, and investigate how this changes the policy design and the achievable regret rates. We show that neither passive nor fully active policies are optimal in general. Instead, the optimal strategy must combine passive and active rounds. We then propose an algorithm based on this principle and show that, for a sufficiently large horizon $T$, it achieves the same simple regret bound as a fully active policy that knows $\mb{p}$.

The simple regret criterion in \eqref{eq: worst-case regret} depends explicitly on $\mb{p}$. Yet a fully active policy agnostic of $\mb{p}$ that chooses a subpopulation at every round,  receives no information about $\mb{p}$. Consequently, if the policy undersamples some subpopulation, the worst-case instance may assign a large mass to that subpopulation, resulting in a large regret. Therefore, without knowledge of $\mb{p}$, the minimax fully active policy must allocate samples uniformly across subpopulations, i.e., $\qpip = \frac{1}{k}\mb{1}$, where $\mb{1}$ denotes the all-ones vector. By Lemma~\ref{lem:history-free simple regret}, the resulting simple regret is
\begin{equation*}
\Theta\Big(\sqrt{\frac{n}{T}} \sum_{j \in [k]} \frac{p_j}{\sqrt{1/k}}\Big)
= \Theta\Big(\sqrt{\frac{nk}{T}}\Big),
\end{equation*}
which is worse than the passive regret rate
$
\Theta\big(\sqrt{\frac{n}{T}\lpnorm{\mb{p}}{\frac12}} \big)
$ derived in \eqref{eq:passive regret bound}, as
$
\sum_{j \in [k]}\! \sqrt{p_j}\! \leq\! \sqrt{k},
$
for every $\mb{p}\! \in\! \delkplus$.

In the next section, we present an algorithm that outperforms the passive policy. It does so by balancing the exploration–exploitation trade-off in learning the distribution $\mathbf{p}$. Passive rounds provide information about $\mathbf{p}$ and allow the learner to estimate it, while active rounds are used to reduce the simple regret once a sufficiently accurate estimate is available.

\subsection{Explore-Explore-Then-Commit (EETC)}
For each round $t \in [T]$ and subpopulation $j \in [k]$, let
$
\ntj = \sum_{s=1}^t \mb{1}\{C_s = j\}
$
denote the number of times subpopulation $j$ has been observed up to round $t$, including round $t$ itself, and let $\pht$ denote the empirical estimate of $\mb{p}$, namely $\pht_j = \ntj/t$. 

The algorithm proceeds in three phases: (I) The algorithm acts passively in order to estimate $\mb{p}$. The key point is that after $\mathcal{O}(\log(T))$ rounds, the estimate becomes accurate enough so that the allocation $\mb{q}^*(\pht)$ computed from $\pht$ yields a value $S_{\mb{p}}(\mb{q}^*(\pht))$ within a constant factor of the optimal value $\lpnorm{\mb{p}}{2/3}$. Motivated by this, we define the first stopping time
\begin{align}
    \label{eq:tau_1 def}
    \tau_1 \coloneqq \inf \Big\{ 1 \leq t \leq T \mid \min_{j \in [k]} \ntj \geq \log(kT^2) \Big\}.
\end{align}
We show that, with high probability, $\tau_1$ is logarithmic in $T$, and that at time $\tau_1$ the estimate $\phtau{1}$ is sufficiently accurate for the subsequent allocation steps.

(II) This phase commences at time $\tau_1$ and the algorithm still acts passively. While at this point, the estimate is already accurate enough to guide the active allocation, the algorithm can still benefit from additional passive samples as long as doing so does not leave too few active rounds to implement the optimal active allocation which is suggested by the current estimate. Lemma~\ref{lem:alphamin} gives exactly this threshold: for an estimate $\pht$, the fully active rate can still be matched if the remaining active fraction is at least $\alphmininp{\pht}$, or equivalently if the passive fraction is at most $1-\alphmininp{\pht}$.
Accordingly, we define
\begin{align}
    \label{eq:tau_2 def}
    \tau_2 \coloneqq \inf \big\{ t \geq \tau_1 + 1 \mid  
    {t}/{T} > 1 - \alphmininp{\pht} \big\}.
\end{align}
Thus, $\tau_2$ is the first round at which the current estimate no longer allows us to keep playing passively and still match the optimal active rate. The algorithm therefore stops the passive phase at round $\tau_2$ and sets
$
\phfreeze \coloneqq \hat{\mb{p}}(\tau_2-1),
$ where $\hat{\mb{p}}(\tau_2-1)$ is the estimate of the subpopulation distribution vector at the previous round, i.e., $\tau_2 -1$.
For large $T$, we show that $\tau_2 \leq T$ with high probability.

\begin{algorithm}[H]
    \caption{Explore-Explore-Then-Commit (EETC)}
    \label{algo:unknown}
    \begin{algorithmic}[1]
        \STATE \textbf{Input:} standard bandit subroutine $\algsr$ and horizon $T$.
        \STATE Initialize one instance $\algsr^j$ of $\algsr$ for each subpopulation $j \in [k]$.
        \FOR{$t = 1,\dots,\tau_2$}
            \STATE Sample $C_t \sim \mb{p}$ and choose the treatment $A_t$ recommended by $\algsr^{C_t}$.
        \ENDFOR
        \STATE Compute $\mb{q}$ and $\mb{r}$ from \eqref{eq:q and r for unknown} using $\phfreeze$.
        \FOR{$t = \tau_2 + 1,\dots,T$}
            \STATE Sample subpopulation $C_t$ according to $\mb{r}$ and choose the treatment $A_t$ recommended by $\algsr^{C_t}$.
        \ENDFOR
        \STATE \textbf{Output:} For each subpopulation $j$, output the recommended arm $\ahatj$ returned by $\algsr^j$.
    \end{algorithmic}
\end{algorithm}

(III) The algorithm switches to active play using $\phfreeze$. The remaining rounds are then allocated exactly as in the budgeted-intervention algorithm of Section~\ref{sec:budget}, with active fraction $1 - \frac{\tau_2}{T}$ and estimated distribution $\phfreeze$. 
Specifically, for each $c > 0$, define
$$
F(c) := \sum_{j \in [k]} \max \{ \frac{\tau_2}{T}\hat{p}^{\mathrm{fr}}_j,\;   (\hat{p}^{\mathrm{fr}}_j)^{2/3}c \},
$$
let $c^* $ be the unique value such that $F(c^*) = 1$, and define the vectors $\mb{q}$ and $\mb{r}$ by
\begin{align}
    \label{eq:q and r for unknown}
    q_j := \max \Big\{ \frac{\tau_2}{T}\hat{p}^{\mathrm{fr}}_j,\;  (\hat{p}^{\mathrm{fr}}_j)^{2/3}c^* \Big\},
    \qquad
    r_j = \frac{q_j - \frac{\tau_2}{T}\hat{p}^{\mathrm{fr}}_j}{1 - {\tau_2}/{T}}.
\end{align}
The algorithm then uses the remaining rounds actively, selecting subpopulations according to $\mb{r}$. Algorithm~\ref{algo:unknown} presents the pseudocode, and the next theorem states its regret guarantee.

\begin{restatable}[Simple Regret of EETC]{theorem}{unknownP}\label{thm:unknown_p}
For sufficiently large horizon $T$, the worst-case simple regret of EETC is
\begin{equation*}
   {\text{SR}(\texttt{EETC}, T, \mathbf p)} \in O \big(\sqrt{\frac{n}{T}} \, \lpnorm{\mb{p}}{2/3}\big).
\end{equation*}
\end{restatable}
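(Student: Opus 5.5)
The plan is to reduce the analysis of EETC to that of a history-free policy run on a good event, and then apply Lemma~\ref{lem:history-free simple regret} conditionally. Let $G$ be the event on which the following three things hold:
\begin{itemize}
\item $\tau_1 \le C\log(kT^2)/\pmin$ for an absolute constant $C$;
\item for all $t \ge \tau_1$ and all $j$, we have $\hat p_j(t) \in [p_j/2, 2p_j]$;
\item $\tau_2 \le T$.
\end{itemize}

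\textbf{Probability of $G$.} The first item follows from a multiplicative Chernoff bound on $N(t)_j$ together with a union bound over $j$. The second item follows from a multiplicative Chernoff bound with deviation width $\sqrt{3p_j t\log(kT^2)}$, union-bounded over $t\le T$ and $j\le k$. Since $N(t)_j \ge \log(kT^2)$ after $\tau_1$, and hence $p_j t \gtrsim \log(kT^2)$ with high probability, the relative error can be made at most a fixed constant. This gives $\pr(G^c) = O(1/T)$.

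For the third item, note that $\alphmininp{\mb x}$ is continuous in $\mb x$. On the good event $\hat{\mb p}(t)$ is within constant factors of $\mb p$, so $1-\alphmininp{\pht}$ is bounded above by a constant $<1$ depending only on $\mb p$. Because $\tau_1 = O(\log T)$, the threshold $t/T > 1-\alphmininp{\pht}$ is crossed before $T$ once $T$ is large; this is where the "sufficiently large $T$" assumption enters. On $G^c$ we simply bound the regret by $1$, which contributes $O(1/T) = o(\sqrt{n/T}\,\lpnorm{\mb p}{2/3})$.

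\textbf{Regret on $G$.} Condition on the passive phase, that is, on $\tau_2$ and $\phfreeze$. Given these, the remaining rounds form a history-free sampling rule: subpopulation $j$ receives $N(\tau_2)_j$ passive samples plus a Binomial$(T-\tau_2, r_j)$ number of active samples. Each $\algsr^j$ is anytime (Lemma~\ref{lem:simple_regret}), and the samples it receives are independent of its own rewards with respect to the allocation, since neither $\tau_1$, $\tau_2$ nor $\phfreeze$ depends on rewards. Hence the within-subpopulation guarantee applies, and $\E[\Delta_j] \lesssim \E\big[\sqrt{n/M_j}\big]$, where $M_j$ is the total number of samples of subpopulation $j$.

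By construction, on $G$ we have $M_j \ge \tfrac12 T q_j$ with high probability. Here $q_j$ is the budgeted optimizer of \eqref{eq:budget optimization} for $\phfreeze$ with active fraction $1-\tau_2/T$, and the factor $\tfrac12$ comes from a Chernoff bound on the passive count $N(\tau_2)_j \approx \tau_2 p_j \approx \tau_2 \phfreezej$ and on the binomial active count. These sample sizes are large because $Tq_j \gtrsim T\pmin^{2/3}/k$.

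Next, by the definition of $\tau_2$, the passive fraction $(\tau_2-1)/T$ is at most $1-\alphmininp{\phfreeze}$, so Lemma~\ref{lem:alphamin} applies with $\phfreeze$ in place of $\mb p$ (up to the off-by-one, which is absorbed into constants for large $T$). Lemma~\ref{lem:alphamin} then gives
\[
q_j = (\phfreezej)^{2/3}\Big/\sum_\ell (\hat p^{\mathrm{fr}}_\ell)^{2/3} = q^*(\phfreeze)_j .
\]
Using $\phfreezej \in [p_j/2, 2p_j]$, we get $q_j \ge c\, q^*(\mb p)_j$ for an absolute constant $c$. Substituting this into the bound on $\E[\Delta_j]$ and applying Lemma~\ref{lem:optimal_q_sp} yields
\[
\sum_j p_j\,\E[\Delta_j] \lesssim \sqrt{n/T}\, S_{\mb p}(\mb q^*(\mb p)) = \sqrt{n/T}\,\lpnorm{\mb p}{2/3}.
\]

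\textbf{Main obstacle.} The delicate step is the rigorous conditioning argument. Since $\tau_2$ and $\phfreeze$ are random stopping-time quantities, I would apply the anytime guarantee of $\algsr^j$ at the random sample size $M_j$. The cleanest route is to note that $M_j$ depends only on the context process, which is independent of the reward draws. Then $\E[\Delta_j \mid M_j] \le C\sqrt{n/M_j}$, and combining this with the lower bound on $M_j$ on $G$ finishes the argument.
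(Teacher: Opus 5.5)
Your architecture matches the paper's: a concentration event, $\phfreeze$ within constant factors of $\mathbf p$, Lemma~\ref{lem:alphamin} giving $\mathbf q=\mathbf q^*(\phfreeze)$, final counts at least $Tq_j/2$, the anytime guarantee applied conditionally on the context process, and the comparison with $S_{\mathbf p}(\mathbf q^*(\mathbf p))$. Your independence argument for the conditioning step is correct, and more explicit than the paper's.

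\textbf{The gap.} It lies in how you certify the accuracy of $\phfreeze$. You claim $\hat p_j(t)\in[p_j/2,2p_j]$ for all $t\ge\tau_1$ with probability $1-O(1/T)$. But the threshold in $\tau_1$ only guarantees counts of order $L=\log(kT^2)$, and at means of order $L$ a factor-two deviation is only polynomially unlikely, with a small exponent.

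For example, take $\mathbf p=(1-\varepsilon,\varepsilon)$. The event $\hat p_2(\tau_1)>2\varepsilon$ is essentially $\{\mathrm{Bin}(L/(2\varepsilon),\varepsilon)\ge L\}$, which has probability about $(kT^2)^{-(2\ln 2-1)/2}\approx T^{-0.39}$. Charging regret $1$ on $G^c$ then leaves a term that dominates $\sqrt{n/T}\,\lpnorm{\mathbf p}{2/3}$. Your Chernoff width $\sqrt{3p_jt\log(kT^2)}$ corresponds to relative error about $\sqrt 3>1$ when $p_jt\approx L$, so it gives no lower bound on $\hat p_j(t)$ there at all.

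\textbf{The missing idea.} Accuracy near $\tau_1$ is never needed. For every $\mathbf u\in\Delta^{k-1}$ one has $1-\alphmininp{\mathbf u}=u_{\max}^{-1/3}/\sum_j u_j^{2/3}\ge k^{-1/3}$, because $u_{\max}\le 1$ and $\sum_j u_j^{2/3}\le k^{1/3}$. Hence $\tau_2>k^{-1/3}T$ deterministically, which for large $T$ exceeds $m_T=\lceil 24\ln(2kT^2)/\pmin\rceil$. A Chernoff bound plus a union bound over $t\ge m_T$ then gives $\phfreezej\in[p_j/2,3p_j/2]$ with probability at least $1-1/T$. This is exactly the paper's route, and $\tau_1$ plays essentially no role in it.

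\textbf{Two smaller problems.} First, your argument for $\tau_2\le T$ is wrong. Constant-factor closeness to $\mathbf p$ does not keep $\pht$ away from the uniform vector, where $1-\alphmininp{\cdot}=1$, and for uniform $\mathbf p$ no constant below $1$ exists. Since $\{\tau_2>T\}$ forces $\hat{\mathbf p}(T)$ to be exactly uniform on its support, this event need not have probability $O(1/T)$. Drop this item instead: on the concentration event, $\{\tau_2>T\}$ can occur only if $p_j\in[2/(3k),2/k]$ for all $j$, and then the all-passive run already achieves $\sqrt{n/T}\sum_j\sqrt{p_j}\lesssim\sqrt{n/T}\,\lpnorm{\mathbf p}{2/3}$.

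Second, given $(\tau_2,\phfreeze)$ the passive count is deterministic: it equals $(\tau_2-1)\phfreezej$ up to one sample, so no concentration is needed there. By contrast, a multiplicative Chernoff bound on the active count fails for the coordinate attaining $\max_j\phfreezej$. Its active mean $T(q_j-s\phfreezej)$, with $s=(\tau_2-1)/T$, is only $O(1)$ by the very definition of $\tau_2$. As in the paper, split the coordinates: if the active mean is below $24\ln(2kT)$, the passive count alone, equal to $Tq_j$ minus that mean, already exceeds $Tq_j/2$.
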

Consequently, EETC up to a constant achieves the same simple regret rate as the optimal fully active policy that knows $\mb{p}$, and hence also matches the known-$p$ lower bound up to constant factors.

\section{Experiments}\label{sec:experiments}

This section presents the experimental results on real-world data. Additional dataset details and further experiments are deferred to Appendix \ref{apx:exp}.

\textbf{MovieLens Experiment.}
We evaluate the algorithms on a MovieLens 1M instance \cite{harper2015movielens}. Treatments correspond to the five most frequently rated movies, and subpopulations correspond to the $14$ demographic groups obtained from the Cartesian product of gender and age group. Thus, in this experiment, $n=5$ and $k=14$, yielding $nk=70$ different treatment-subpopulation reward distributions. When an algorithm selects treatment $a$ for subpopulation $j$, the environment samples uniformly with replacement from the historical ratings matching $(a,j)$ and returns the sampled rating as the reward. The population weights $p_j$ and the mean rewards $\mu_{a,j}$ are computed from the empirical frequencies and cell averages of the filtered dataset.  For more details see Appendix~\ref{apx:exp}.

Within each subpopulation, we instantiate the standard bandit subroutine in two ways: \emph{Uniform}, which samples the available treatments uniformly, and \emph{UCB}, which uses the standard upper-confidence-bound rule. Both are known to achieve the optimal simple regret rate up to logarithmic factors. We compare three policy classes: fully passive, fully active with known $\mb{p}$ using the optimal allocation proportional to $p_j^{2/3}$, and EETC. For each policy class, we report results with both subroutines, yielding six methods in total.

\textbf{Results.}
Table~\ref{tab:movielens_main} reports the simple regret after horizons $T \in \{2500,5000,10000,15000\}$, averaged over $50$ independent runs, together with confidence intervals. Across both subroutines, the passive policies perform worst, while EETC and the known-$p$ active policies achieve clearly smaller regret. Moreover, as the horizon increases, the performance of EETC becomes very close to that of the known-$p$ active benchmark. These results consistent with the theory showcase that estimating $p$ from passive observations and then switching to an active allocation is more efficient than remaining fully passive, and that EETC can closely approach the performance of the optimal known-$p$ active policy.


\begin{table}[t]
    \centering
    \small
    \caption{Simple regret on the MovieLens 1M instance \cite{harper2015movielens}. Each entry reports mean $\pm$ 95\% confidence interval over $50$ independent runs. All values are multiplied by $100$ for readability.}
    \label{tab:movielens_main}
    \begin{tabular}{lcccc}
        \toprule
        Method & $T=2500$ & $T=5000$ & $T=10000$ & $T=15000$ \\
        \midrule
        Passive + Uniform & 3.76 $\pm$ 0.53 & 2.05 $\pm$ 0.32 & 1.10 $\pm$ 0.23 & 0.75 $\pm$ 0.13 \\
        EETC + Uniform & 3.52 $\pm$ 0.52 & 1.85 $\pm$ 0.32 & 0.81 $\pm$ 0.25 & 0.59 $\pm$ 0.13 \\
        Known-$p$ Active + Uniform & \textbf{2.99 $\pm$ 0.51} & \textbf{1.78 $\pm$ 0.37} & \textbf{0.74 $\pm$ 0.17} & \textbf{0.56 $\pm$ 0.14} \\
        \midrule
        Passive + UCB & 2.53 $\pm$ 0.37 & 1.29 $\pm$ 0.23 & 0.71 $\pm$ 0.13 & 0.47 $\pm$ 0.11 \\
        EETC + UCB & 2.57 $\pm$ 0.37 & 0.97 $\pm$ 0.17 & \textbf{0.41 $\pm$ 0.11} & 0.33 $\pm$ 0.10 \\
        Known-$p$ Active + UCB & \textbf{2.28 $\pm$ 0.46} & \textbf{0.88 $\pm$ 0.21} & 0.49 $\pm$ 0.12 & \textbf{0.31 $\pm$ 0.07} \\
        \bottomrule
    \end{tabular}
\end{table}

\section{Conclusion}

We studied the bandit with subpopulations problem with a context-weighted simple regret objective and quantified the benefit of actively choosing which subpopulation to sample. We showed that active subpopulation selection always improves performance over passive sampling, characterized the extent of this improvement, and proved the optimality of a simple history-free allocation rule up to constant factors. Moreover, we extended the analysis to settings with a limited intervention budget and unknown subpopulation distribution. In the latter case, we proposed the EETC algorithm that balances passive estimation with active allocation rounds and for long horizons matches the performance of the optimal fully active policy which knows the population distribution.
An interesting direction for future work is to develop gap-dependent guarantees for the same problem. In such a setting, optimal algorithms may allocate fewer samples to subpopulations where the best treatment is easy to identify, leading to allocation rules that differ from the worst-case ones studied here. Another possible extension is to consider models where the context is observed only after the treatment is chosen, inspired by related contextual and causal bandit settings \cite{russac2021b, lattimore2016causal, shahverdikondori2025optimal}.

\bibliographystyle{alpha}
\bibliography{biblio}

@book{bandit-book1-lattimore2020bandit,
  title={Bandit algorithms},
  author={Lattimore, Tor and Szepesv{\'a}ri, Csaba},
  year={2020},
  publisher={Cambridge University Press}
}

@article{bandit-book2-bubeck2012regret,
  title={Regret analysis of stochastic and nonstochastic multi-armed bandit problems},
  author={Bubeck, S{\'e}bastien and Cesa-Bianchi, Nicolo and others},
  journal={Foundations and Trends{\textregistered} in Machine Learning},
  volume={5},
  number={1},
  pages={1--122},
  year={2012},
  publisher={Now Publishers, Inc.}
}

@article{lattimore2016causal,
  title={Causal bandits: Learning good interventions via causal inference},
  author={Lattimore, Finnian and Lattimore, Tor and Reid, Mark D},
  journal={Advances in neural information processing systems},
  volume={29},
  year={2016}
}

@article{pomis-lee2018structural,
  title={Structural causal bandits: Where to intervene?},
  author={Lee, Sanghack and Bareinboim, Elias},
  journal={Advances in neural information processing systems},
  volume={31},
  year={2018}
}

@inproceedings{CUCB-lu2020regret,
  title={Regret analysis of bandit problems with causal background knowledge},
  author={Lu, Yangyi and Meisami, Amirhossein and Tewari, Ambuj and Yan, William},
  booktitle={Conference on Uncertainty in Artificial Intelligence},
  pages={141--150},
  year={2020},
  organization={PMLR}
}

@inproceedings{budget-nair2021budgeted,
  title={Budgeted and non-budgeted causal bandits},
  author={Nair, Vineet and Patil, Vishakha and Sinha, Gaurav},
  booktitle={International Conference on Artificial Intelligence and Statistics},
  pages={2017--2025},
  year={2021},
  organization={PMLR}
}

@inproceedings{jamshidi2024confounded,
  title={Confounded budgeted causal bandits},
  author={Jamshidi, Fateme and Etesami, Jalal and Kiyavash, Negar},
  booktitle={Causal Learning and Reasoning},
  pages={423--461},
  year={2024},
  organization={PMLR}
}

@article{shahverdikondori2025optimal,
  title={Optimal Best Arm Identification with Post-Action Context},
    author={Mohammad Shahverdikondori and Amir Mohammad Abouei and Alireza Rezaeimoghadam and Negar Kiyavash},
    journal={arXiv preprint arXiv:2502.03061},
    year={2025},
    eprint={2502.03061},
    archivePrefix={arXiv},
    primaryClass={cs.LG},
}

@article{perchet2013multi,
  title={THE MULTI-ARMED BANDIT PROBLEM WITH COVARIATES},
  author={Perchet, Vianney and Rigollet, Philippe},
  journal={THE ANNALS of STATISTICS},
  pages={693--721},
  year={2013},
  publisher={JSTOR}
}

@article{russac2021b,
  title={A/b/n testing with control in the presence of subpopulations},
  author={Russac, Yoan and Katsimerou, Christina and Bohle, Dennis and Capp{\'e}, Olivier and Garivier, Aur{\'e}lien and Koolen, Wouter M},
  journal={Advances in Neural Information Processing Systems},
  volume={34},
  pages={25100--25110},
  year={2021}
}

@inproceedings{wu2023best,
  title={Best Arm Identification with Fairness Constraints on Subpopulations},
  author={Wu, Yuhang and Zheng, Zeyu and Zhu, Tingyu},
  booktitle={2023 Winter Simulation Conference (WSC)},
  pages={540--551},
  year={2023},
  organization={IEEE}
}

@inproceedings{curth2023adaptive,
  title={Adaptive identification of populations with treatment benefit in clinical trials: machine learning challenges and solutions},
  author={Curth, Alicia and H{\"u}y{\"u}k, Alihan and Van Der Schaar, Mihaela},
  booktitle={International Conference on Machine Learning},
  pages={6603--6622},
  year={2023},
  organization={PMLR}
}

@article{bickel2009discriminative,
  title={Discriminative learning under covariate shift.},
  author={Bickel, Steffen and Br{\"u}ckner, Michael and Scheffer, Tobias},
  journal={Journal of Machine Learning Research},
  volume={10},
  number={9},
  year={2009}
}

@article{kato2022best,
  title={Best arm identification with contextual information under a small gap},
  author={Kato, Masahiro and Imaizumi, Masaaki and Ishihara, Takuya and Kitagawa, Toru},
  journal={arXiv preprint arXiv:2209.07330},
  year={2022}
}

@article{foster2019variational,
  title={Variational Bayesian optimal experimental design},
  author={Foster, Adam and Jankowiak, Martin and Bingham, Elias and Horsfall, Paul and Teh, Yee Whye and Rainforth, Thomas and Goodman, Noah},
  journal={Advances in neural information processing systems},
  volume={32},
  year={2019}
}

@inproceedings{jorke2022simple,
  title={Simple regret minimization for contextual bandits using bayesian optimal experimental design},
  author={J{\"o}rke, Matthew and Lee, Jonathan and Brunskill, Emma},
  booktitle={ICML2022 Workshop on Adaptive Experimental Design and Active Learning in the Real World},
  year={2022}
}

@article{kato2024adaptive,
  title={Adaptive experimental design for policy learning},
  author={Kato, Masahiro and Okumura, Kyohei and Ishihara, Takuya and Kitagawa, Toru},
  journal={arXiv preprint arXiv:2401.03756},
  year={2024}
}

@inproceedings{russo2024fair,
  title={Fair best arm identification with fixed confidence},
  author={Russo, Alessio and Vannella, Filippo},
  booktitle={2024 IEEE 63rd Conference on Decision and Control (CDC)},
  pages={1173--1180},
  year={2024},
  organization={IEEE}
}

@book{hardy1952inequalities,
  title={Inequalities},
  author={Hardy, G. H. and Littlewood, J. E. and P{\'o}lya, G.},
  edition={2},
  year={1952},
  publisher={Cambridge University Press}
}

@book{dubhashi2009concentration,
  title={Concentration of measure for the analysis of randomized algorithms},
  author={Dubhashi, Devdatt P and Panconesi, Alessandro},
  year={2009},
  publisher={Cambridge University Press}
}

@article{krishnamurthy2023proportional,
  title={Proportional response: Contextual bandits for simple and cumulative regret minimization},
  author={Krishnamurthy, Sanath Kumar and Zhan, Ruohan and Athey, Susan and Brunskill, Emma},
  journal={Advances in Neural Information Processing Systems},
  volume={36},
  pages={30255--30266},
  year={2023}
}

@article{contextual-simple-regret-deshmukh2018simple,
  title={Simple regret minimization for contextual bandits},
  author={Deshmukh, Aniket Anand and Sharma, Srinagesh and Cutler, James W and Moldwin, Mark and Scott, Clayton},
  journal={arXiv preprint arXiv:1810.07371},
  year={2018}
}

@article{multibandit-bai-gabillon2011multi,
  title={Multi-bandit best arm identification},
  author={Gabillon, Victor and Ghavamzadeh, Mohammad and Lazaric, Alessandro and Bubeck, S{\'e}bastien},
  journal={Advances in Neural Information Processing Systems},
  volume={24},
  year={2011}
}

@inproceedings{multi-bandit-bai2-scarlett2019overlapping,
  title={Overlapping multi-bandit best arm identification},
  author={Scarlett, Jonathan and Bogunovic, Ilija and Cevher, Volkan},
  booktitle={2019 IEEE International Symposium on Information Theory (ISIT)},
  pages={2544--2548},
  year={2019},
  organization={IEEE}
}

@inproceedings{moss-audibert2009minimax,
  title={Minimax policies for adversarial and stochastic bandits},
  author={Audibert, Jean-Yves and Bubeck, S{\'e}bastien},
  booktitle={Colt},
  pages={217--226},
  year={2009}
}

@article{harper2015movielens,
  title={The movielens datasets: History and context},
  author={Harper, F Maxwell and Konstan, Joseph A},
  journal={Acm transactions on interactive intelligent systems (tiis)},
  volume={5},
  number={4},
  pages={1--19},
  year={2015},
  publisher={Acm New York, NY, USA}
}

@inproceedings{garivier2016maximin,
  title={Maximin action identification: A new bandit framework for games},
  author={Garivier, Aur{\'e}lien and Kaufmann, Emilie and Koolen, Wouter M},
  booktitle={Conference on Learning Theory},
  pages={1028--1050},
  year={2016},
  organization={PMLR}
}

@inproceedings{maxmin-grouped-wang2022max,
  title={Max-min grouped bandits},
  author={Wang, Zhenlin and Scarlett, Jonathan},
  booktitle={Proceedings of the AAAI Conference on Artificial Intelligence},
  volume={36},
  number={8},
  pages={8603--8611},
  year={2022}
}

@article{shahverdikondori2025best,
  title={Best Group Identification in Multi-Objective Bandits},
  author={Shahverdikondori, Mohammad and Badri, Mohammad Reza and Kiyavash, Negar},
  journal={arXiv preprint arXiv:2505.17869},
  year={2025}
}

@inproceedings{budgeted2-maiti2022causal,
  title={A causal bandit approach to learning good atomic interventions in presence of unobserved confounders},
  author={Maiti, Aurghya and Nair, Vineet and Sinha, Gaurav},
  booktitle={Uncertainty in Artificial Intelligence},
  pages={1328--1338},
  year={2022},
  organization={PMLR}
}

@article{shahverdikondori2025graph,
  title={Graph Learning is Suboptimal in Causal Bandits},
  author={Shahverdikondori, Mohammad and Etesami, Jalal and Kiyavash, Negar},
  journal={arXiv preprint arXiv:2510.16811},
  year={2025}
}

@inproceedings{zhao2023revisiting,
  title={Revisiting simple regret: Fast rates for returning a good arm},
  author={Zhao, Yao and Stephens, Connor and Szepesv{\'a}ri, Csaba and Jun, Kwang-Sung},
  booktitle={International Conference on Machine Learning},
  pages={42110--42158},
  year={2023},
  organization={PMLR}
}

@inproceedings{bubeck2009pure,
  title={Pure exploration in multi-armed bandits problems},
  author={Bubeck, S{\'e}bastien and Munos, R{\'e}mi and Stoltz, Gilles},
  booktitle={International conference on Algorithmic learning theory},
  pages={23--37},
  year={2009},
  organization={Springer}
}

@article{liu2026efficient,
  title={Efficient Simple Regret Algorithms for Stochastic Contextual Bandits},
  author={Liu, Shuai and Bakhtiari, Alireza and Ayoub, Alex and Hao, Botao and Szepesv{\'a}ri, Csaba},
  journal={arXiv preprint arXiv:2601.21167},
  year={2026}
}


\newpage

\onecolumn

\appendix

\begin{center}
    {\LARGE \bfseries Appendix}
\end{center}

The appendix is organized as follows. Section~\ref{apx:related_work} provides a more detailed discussion of related work. 
Section~\ref{apx: proofs} contains the proofs omitted from the main text. Finally, Section~\ref{apx:exp} presents additional experimental details and results.

\section{Further Discussion on Related Work}\label{apx:related_work}

In this section, we provide some additional discussion of related work.











\textbf{Heterogeneous Treatment Effects and Subpopulation-Aware Decision Making.}
A central line of work studies how treatment heterogeneity across subpopulations fundamentally alters optimal experimental design and decision rules. \cite{russac2021b} shows that ignoring heterogeneity across observed or latent subgroups can lead to systematically suboptimal or even misleading treatment selection, even when each arm is evaluated against a control. When outcomes vary across groups, the problem shifts from selecting a globally best arm to learning \textit{group-conditional policies}, effectively turning A/B/n testing into a policy learning problem. This perspective is further developed in adaptive clinical trial settings by \cite{curth2023adaptive}, who propose methods to sequentially identify subpopulations with positive treatment effects under strict statistical constraints. Their work emphasizes adaptive refinement of target populations via procedures such as AdaGGI and AdaGCPI, focusing not on a single optimal subgroup but on any collection of beneficiaries.

Complementing this line, \cite{wu2023best} and \cite{russo2024fair} study best arm identification under \textit{fairness} constraints across subpopulations. Unlike \cite{russac2021b}, which allows fully personalized policies, fairness requirements couple subpopulations by imposing performance constraints across groups. This fundamentally changes the decision problem: the goal is no longer purely optimal identification but balancing optimality with worst-case guarantees across groups, at the cost of increased sample complexity.

\textbf{Contextual and Policy Learning Bandits.}
A related but distinct literature incorporates observable covariates into sequential decision making. \cite{perchet2013multi} extends the classical bandit setting to contextual information, where the goal is to learn a context-dependent policy rather than a single best arm. Under smoothness assumptions, they show that partition-based methods achieve minimax optimal regret by learning locally optimal actions within regions of the covariate space.

In the small gap regime, \cite{kato2022best} further studies best arm identification with context, showing that contextual structure can significantly reduce sample complexity when arms are hard to distinguish. The key difference is that efficiency gains arise only when contexts meaningfully separate arms; otherwise, learning remains intrinsically difficult.
More generally, \cite{kato2024adaptive} shows that even in contextual settings, standard experimental designs are suboptimal when the goal is policy learning rather than estimation, and proposes adaptive designs that concentrate sampling in regions where policy decisions are most uncertain. Similarly, \cite{contextual-simple-regret-deshmukh2018simple,jorke2022simple} frame simple regret minimization in contextual bandits as a Bayesian experimental design problem where exploration is guided by expected information gain about the optimal policy rather than reward maximization.


Beyond bandits, Bayesian optimal experimental design provides a general framework for sequential data acquisition. \cite{foster2019variational} develops scalable variational approximations to expected information gain, enabling BOED in complex models where exact computation is infeasible. These methods connect naturally to contextual bandits and policy learning approaches that also select experiments based on information gain rather than immediate reward.

A complementary perspective arises in supervised learning under covariate shift which could be the result of different subpopulations. \cite{bickel2009discriminative} shows that distribution mismatch between training and test covariates biases standard empirical risk minimization, and proposes density-ratio reweighting to correct this issue. While outside the sequential decision-making setting, this line of work shares with contextual bandits and adaptive design the central idea that data collection and inference must be aligned with the target decision environment.

\textbf{Bandits with Grouped Arms.}
Our setting is also related to bandit problems with grouped arms. Indeed, each subpopulation can be viewed as a group of $n$ treatment arms, so the learner effectively faces $k$ groups and seeks to identify the best arm within each group. The closest line of work is multi-bandit best arm identification \cite{multibandit-bai-gabillon2011multi, multi-bandit-bai2-scarlett2019overlapping}, which studies identifying the best arm in each of several bandits with the objective of minimizing the probability of misidentification. This is particularly close to our fully active setting, since both problems require allocating samples across several independent bandit instances. However, those works focus on best arm identification, whereas our objective is to minimize context-weighted simple regret.

Other grouped arm formulations consider different objectives. Maximin action identification \cite{garivier2016maximin} and max-min grouped bandits \cite{maxmin-grouped-wang2022max} study settings in which performance is determined by the worst arm or outcome within a group, while best group identification in multi-objective bandits \cite{shahverdikondori2025best} focuses on identifying a single best group under multi-objective rewards. In contrast, our goal is not to identify one best group or action, but to recommend the best arm separately within every group. In addition, we characterize the gain obtained by actively choosing which group, that is, which subpopulation, to sample.

\textbf{Causal Bandits.}
Another related line of work is the general framework of causal bandits \cite{lattimore2016causal, pomis-lee2018structural, CUCB-lu2020regret, jamshidi2024confounded}, where actions correspond to interventions on subsets of variables in a causal graph and the learner observes the reward together with the non-intervened variables; see \cite{lattimore2016causal} for details. Our setting can be viewed as a simple causal bandit problem with three nodes: a treatment node, a subpopulation node, and a reward node, where the reward is affected by both the treatment and the subpopulation, while the two parent variables are independent. Under this interpretation, the passive setting corresponds to intervening only on the treatment node, whereas the active setting corresponds to intervening on both the treatment and subpopulation nodes.

A related class of models in this literature is the \emph{parallel graph} setting, in which several independent variables all directly affect the reward \cite{lattimore2016causal, budget-nair2021budgeted}. In \cite{lattimore2016causal} the authors provide algorithms with simple regret guarantees for binary variables, while \cite{budget-nair2021budgeted} and \cite{budgeted2-maiti2022causal} study more general no-backdoor and general graphs and settings with intervention costs, and derive both simple and cumulative regret guarantees. Another related work studies best arm identification in a setting where the context is not independent of the treatment but is instead a random function of the chosen treatment \cite{shahverdikondori2025optimal}.

These works are structurally related to ours, but differ substantially in objective and in the type of interventions being compared. In particular, they focus on identifying a single globally best action, whereas our goal is to identify the best treatment separately for each subpopulation and to characterize the gain obtained by actively choosing both the treatment and the subpopulation, rather than only one of them. Another recent work has also studied the effect of the number of intervened nodes at each round, but the results are on cumulative regret in a setting where the causal structure is unknown \cite{shahverdikondori2025graph}. Extending our results to settings with multiple variables and richer causal structures would be an interesting future direction.

\section{Omitted Proofs} \label{apx: proofs}

This section presents the proofs omitted from the main text due to space constraints. We first start with a Lemma that will be used throughout the proofs.

\begin{lemma}[Concentration of Independent Bernoulli Counts]\label{lem:independent-bernoulli-count}
Let $X_1,\dots,X_m$ be independent Bernoulli random variables with means $\alpha_1,\dots,\alpha_m \in [0,1]$. Define
$$
S_m \coloneqq \sum_{t=1}^m X_t,
\qquad
M_m \coloneqq \ebb[S_m] = \sum_{t=1}^m \alpha_t.
$$
If, for some $\eta \in (0,1)$,
$$
M_m \geq 24 \ln\paran{\frac{2}{\eta}},
$$
then
$$
\pr\paran{\frac{M_m}{2} \leq S_m \leq \frac{3M_m}{2}} \geq 1-\eta.
$$
\end{lemma}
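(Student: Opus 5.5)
The plan is to prove the two tails separately with multiplicative Chernoff bounds for sums of independent Bernoulli variables, and then combine them with a union bound. The variables $X_t$ are independent, take values in $\{0,1\}$, and have possibly different means, so the standard Chernoff bounds apply directly with mean $M_m$. For any $\delta\in(0,1)$ these bounds read
\begin{align*}
\pr\paran{S_m \geq (1+\delta)M_m} \leq \exp\paran{-\frac{\delta^2 M_m}{2+\delta}},
\qquad
\pr\paran{S_m \leq (1-\delta)M_m} \leq \exp\paran{-\frac{\delta^2 M_m}{2}}.
\end{align*}
If one wants these derived rather than quoted, the route is the moment generating function. By independence, $\ebb[e^{\lambda S_m}] = \prod_t (1+\alpha_t(e^\lambda-1)) \leq \exp(M_m(e^\lambda-1))$, using $1+x\le e^x$. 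One then applies Markov's inequality and optimizes over $\lambda$; the lower tail is handled the same way with $\lambda<0$. This is standard, see e.g.\ \cite{dubhashi2009concentration}.

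Next, set $\delta=1/2$. The upper tail becomes $\exp(-M_m/10)$ and the lower tail becomes $\exp(-M_m/8)$. Both are at most $\exp(-M_m/10)$. Taking a union bound over the two tails,
\begin{align*}
\pr\paran{S_m < \tfrac{M_m}{2} \text{ or } S_m > \tfrac{3M_m}{2}} \leq 2\exp\paran{-\frac{M_m}{10}}.
\end{align*}

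Finally, we need $2\exp(-M_m/10)\leq \eta$, which is equivalent to $M_m \geq 10\ln(2/\eta)$. The hypothesis gives $M_m\geq 24\ln(2/\eta)$, and since $\ln(2/\eta)>0$ for $\eta\in(0,1)$, this is stronger than required. The constant $24$ therefore leaves slack, which absorbs weaker textbook forms of the bound, such as $\exp(-\delta^2 M_m/3)$ giving $M_m\geq 12\ln(2/\eta)$. The events $\{S_m=M_m/2\}$ and $\{S_m=3M_m/2\}$ lie inside the good event, so the non-strict inequalities in the statement cause no trouble.

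The only delicate point is that the means $\alpha_t$ differ across $t$, so this is not a binomial variable. The MGF bound above handles this because it depends on the $\alpha_t$ only through $M_m$. Apart from that, the argument is a routine application of the Chernoff bounds, and I do not expect any real obstacle.
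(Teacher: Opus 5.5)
Your proposal is correct and follows essentially the same route as the paper: multiplicative Chernoff bounds on each tail with $\delta = 1/2$, then a union bound. The only difference is the constants. You use the upper-tail form $\exp(-\delta^2 M_m/(2+\delta))$, which gives $\exp(-M_m/10)$, whereas the paper uses $\exp(-M_m/12)$ and then loosely bounds the sum by $2\exp(-M_m/24)$. Either way, the hypothesis $M_m \geq 24\ln(2/\eta)$ suffices.
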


\begin{proof}
Since $X_1,\dots,X_m$ are independent Bernoulli random variables, possibly with different means, the multiplicative Chernoff bounds apply; see, e.g., \cite{dubhashi2009concentration}. In particular,
\begin{align*}
    \pr\paran{S_m < \frac{M_m}{2}}
    &\leq
    \exp\paran{-\frac{M_m}{8}}, \\
    \pr\paran{S_m > \frac{3M_m}{2}}
    &\leq
    \exp\paran{-\frac{M_m}{12}}.
\end{align*}
Therefore, by a union bound,
\begin{align*}
    \pr\paran{\frac{M_m}{2} \leq S_m \leq \frac{3M_m}{2}}
    &\geq
    1-\exp\paran{-\frac{M_m}{8}}-\exp\paran{-\frac{M_m}{12}} \\
    &\geq
    1-2\exp\paran{-\frac{M_m}{24}}.
\end{align*}
Since $M_m \geq 24 \ln(2/\eta)$, it follows that
$$
2\exp\paran{-\frac{M_m}{24}} \leq \eta,
$$
and hence
$$
\pr\paran{\frac{M_m}{2} \leq S_m \leq \frac{3M_m}{2}} \geq 1-\eta.
$$
\end{proof}

Next, we introduce inequalities on the norms of vectors in $\mathbb{R}^k$. These inequalities are standard; see, for example, \cite[Chapter~2]{hardy1952inequalities}.

\begin{theorem}[Comparison and Interpolation of $\ell_p$ Norms]\label{thm:lp-comparison-interpolation}
For any vector $x \in \mathbb{R}^k$ and any exponents $0 < p < q < \infty$, we have
$$
\lpnorm{x}{q} \leq \lpnorm{x}{p} \leq k^{\frac{1}{p}-\frac{1}{q}} \lpnorm{x}{q}.
$$
More generally, for any $0 < p \leq r \leq q < \infty$ and any $\theta \in [0,1]$ such that
$$
\frac{1}{r} = \frac{\theta}{p} + \frac{1-\theta}{q},
$$
we have
$$
\lpnorm{x}{r} \leq \lpnorm{x}{p}^{\theta}\lpnorm{x}{q}^{1-\theta}.
$$
\end{theorem}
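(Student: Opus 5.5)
The proof proceeds in three steps: reduce to nonnegative vectors, prove the monotonicity inequality by a homogeneity argument, get the reverse bound from Hölder (equivalently Jensen), and derive interpolation from Hölder with conjugate exponents. Since every quantity depends only on $|x_i|$, I will assume $x_i \ge 0$ throughout, and also $x \neq 0$, the zero case being trivial.

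\emph{Left inequality $\lpnorm{x}{q} \le \lpnorm{x}{p}$.} Both sides are positively homogeneous of degree one, so I normalize $\lpnorm{x}{p}=1$. Then $\sum_i x_i^p = 1$, which forces $x_i \le 1$ for every $i$. Because $q>p$, this gives $x_i^q \le x_i^p$ for each $i$. Summing yields $\sum_i x_i^q \le 1$, that is, $\lpnorm{x}{q}\le 1 = \lpnorm{x}{p}$.

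\emph{Right inequality $\lpnorm{x}{p} \le k^{1/p-1/q}\lpnorm{x}{q}$.} I apply Hölder's inequality to $\sum_i x_i^p \cdot 1$ with exponents $s=q/p>1$ and $s' = q/(q-p)$. This gives $\sum_i x_i^p \le (\sum_i x_i^q)^{p/q} k^{1-p/q}$. Taking the $1/p$-th power gives exactly $\lpnorm{x}{p} \le k^{1/p-1/q}\lpnorm{x}{q}$. Equivalently, this is the power-mean inequality obtained from Jensen's inequality applied to the convex map $u\mapsto u^{q/p}$.

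\emph{Interpolation.} The endpoint cases $\theta\in\{0,1\}$ are immediate, so assume $\theta\in(0,1)$. I write $x_i^r = x_i^{\theta r}\, x_i^{(1-\theta) r}$ and apply Hölder with exponents $a = p/(\theta r)$ and $b = q/((1-\theta) r)$. The defining relation $1/r=\theta/p+(1-\theta)/q$ is precisely the condition $1/a+1/b=1$, and $a,b> 1$ follows since both terms of that relation are positive. Hölder then yields
$$\sum_i x_i^r \le \Big(\sum_i x_i^p\Big)^{\theta r/p}\Big(\sum_i x_i^q\Big)^{(1-\theta) r/q}.$$
Taking $1/r$-th powers gives $\lpnorm{x}{r}\le \lpnorm{x}{p}^\theta \lpnorm{x}{q}^{1-\theta}$.

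No step is a genuine obstacle; everything reduces to Hölder's inequality. The only point needing care is that $\lpnorm{\cdot}{p}$ for $p<1$ is not a norm. However, the argument never uses the triangle inequality: it only applies Hölder to the nonnegative sequences $x_i^{p}$, $x_i^{\theta r}$ and $x_i^{(1-\theta)r}$, with exponents strictly greater than one. So the argument is valid for all $0<p<q<\infty$, including the exponents $1/2$ and $2/3$ used in the paper.
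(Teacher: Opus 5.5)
Your proof is correct. The normalization argument gives $\|x\|_q \le \|x\|_p$. H\"older against the all-ones vector with exponents $q/p$ and $q/(q-p)$ gives the factor $k^{1/p-1/q}$. H\"older with exponents $p/(\theta r)$ and $q/((1-\theta)r)$ gives the interpolation bound, with the cases $\theta\in\{0,1\}$ and $x=0$ trivial.

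The paper gives no proof of its own and simply cites the result as standard from Hardy--Littlewood--P\'olya, so your argument is the textbook derivation written out. Your remark that only H\"older for nonnegative sequences is used, and never a triangle inequality, is a worthwhile addition, since the paper applies the comparison part with the quasi-norm exponents $1/2$ and $2/3$.
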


\subsection{Proofs of Section \ref{sec:known_p}}

\subsubsection{Proof of Lemma \ref{lem: lower bound}}

\rewardfreeOptimal*

\begin{proof}
Fix $\mathbf p \in \delkplus$ and an arbitrary policy $\pi \in \Pi(\mathbf p)$.

For each subpopulation $j \in [k]$ and arm $a \in [n]$, let $T_{a,j}$ denote the number of times arm $a$ is pulled in subpopulation $j$ over the $T$ rounds. For a family of instances defined below and indexed by $\omega$, let
$$
\bar T_{a,j} \coloneqq \frac{1}{2^k} \sum_{\omega \in \{0,1\}^k} \ebb_{\omega}[T_{a,j}],
\qquad
\bar T_j \coloneqq \sum_{a \in [n]} \bar T_{a,j},
$$
where $\ebb_{\omega}$ denotes expectation under instance $V^{\omega}$. For each $j$, choose
$$
b_j \in \argmin_{a \neq 1} \bar T_{a,j}.
$$
Then, by the pigeonhole principle,
$$
\bar T_{b_j,j} \leq \frac{\bar T_j}{n-1}.
$$

Now fix gaps $\delta_1,\dots,\delta_k \in (0,1/4]$, and consider the family of instances
$$
\mathcal V \coloneqq \{V^{\omega} : \omega \in \{0,1\}^k\},
$$
where rewards are Bernoulli and, for each $\omega = (\omega_1,\dots,\omega_k)$, the mean rewards are given by
$$
\mu_{a,j}^{\omega} =
\begin{cases}
\frac12 + \delta_j, & \text{if } a = 1,\\
\frac12 + 2\delta_j \omega_j, & \text{if } a = b_j,\\
\frac12, & \text{otherwise.}
\end{cases}
$$
Thus, in subpopulation $j$, the optimal arm is
$$
a_j^*(\omega) =
\begin{cases}
1, & \text{if } \omega_j = 0,\\
b_j, & \text{if } \omega_j = 1.
\end{cases}
$$

For any $\omega \in \{0,1\}^k$, let $\text{SR}(\pi,T,V^{\omega})$ denote the simple regret of $\pi$ on instance $V^{\omega}$. Since recommending a non-optimal arm in subpopulation $j$ incurs regret at least $\delta_j$, we have
\begin{align}
    \text{SR}(\pi,T,V^{\omega})
    \geq
    \sum_{j \in [k]} p_j \delta_j \pr_{\omega}\paran{\ahatj \neq a_j^*(\omega)},
    \label{eq:rf-proof-inst-lb}
\end{align}
where $\pr_{\omega}$ denotes the probability measure induced by the full interaction between $\pi$ and $V^{\omega}$.

Averaging over the family and using the fact that the worst-case regret dominates the average regret, we obtain
\begin{align}
    \srtpi
    &\geq \frac{1}{2^k} \sum_{\omega \in \{0,1\}^k} \text{SR}(\pi,T,V^{\omega}) \nonumber\\
    &\geq \sum_{j \in [k]} p_j \delta_j \bar e_j,
    \label{eq:rf-proof-average}
\end{align}
where
$$
\bar e_j \coloneqq \frac{1}{2^k} \sum_{\omega \in \{0,1\}^k} \pr_{\omega}\paran{\ahatj \neq a_j^*(\omega)}.
$$

We now lower bound $\bar e_j$. Fix $j \in [k]$. For $\omega$ with $\omega_j = 0$, let $\omega^{\lnot j}$ denote the vector obtained by flipping the $j$-th coordinate of $\omega$, and let
$$
B_j \coloneqq \{\ahatj = 1\}.
$$
Under $V^{\omega}$, arm $1$ is optimal in subpopulation $j$, while under $V^{\omega^{\lnot j}}$, arm $b_j$ is optimal. Therefore,
\begin{align*}
    \pr_{\omega}\paran{\ahatj \neq a_j^*(\omega)}
    +
    \pr_{\omega^{\lnot j}}\paran{\ahatj \neq a_j^*(\omega^{\lnot j})}
    \geq
    \pr_{\omega}(B_j^c) + \pr_{\omega^{\lnot j}}(B_j).
\end{align*}
By the Bretagnolle--Huber inequality,
\begin{align*}
    \pr_{\omega}(B_j^c) + \pr_{\omega^{\lnot j}}(B_j)
    \geq
    \frac12 \exp\paran{-D(\pr_{\omega},\pr_{\omega^{\lnot j}})},
\end{align*}

where $D(\cdot, \cdot)$ shows the KL divergence.
Hence,
\begin{align}
    \pr_{\omega}\paran{\ahatj \neq a_j^*(\omega)}
    +
    \pr_{\omega^{\lnot j}}\paran{\ahatj \neq a_j^*(\omega^{\lnot j})}
    \geq
    \frac12 \exp\paran{-D(\pr_{\omega},\pr_{\omega^{\lnot j}})}.
    \label{eq:rf-proof-bh}
\end{align}

Averaging \eqref{eq:rf-proof-bh} over all $\omega$ with $\omega_j = 0$ gives
\begin{align}
    \bar e_j
    \geq
    \frac{1}{2^{k+1}}
    \sum_{\omega : \omega_j = 0}
    \exp\paran{-D(\pr_{\omega},\pr_{\omega^{\lnot j}})}.
    \label{eq:rf-proof-ej1}
\end{align}

We now control the KL terms. Since $V^{\omega}$ and $V^{\omega^{\lnot j}}$ differ only in the reward distribution of arm $b_j$ in subpopulation $j$, by KL divergence decomposition Lemma \cite{bandit-book1-lattimore2020bandit} we have
\begin{align*}
    D(\pr_{\omega},\pr_{\omega^{\lnot j}})
    =
    \ebb_{\omega}[T_{b_j,j}] \,
    \kl{\frac12,\frac12+2\delta_j},
\end{align*}
where $\kl{\cdot,\cdot}$ denotes the Bernoulli KL divergence. Since $\delta_j \leq 1/4$, there exists a universal constant $c_0>0$ such that
$$
\kl{\frac12,\frac12+2\delta_j} \leq c_0 \delta_j^2,
$$
and therefore
\begin{align*}
    D(\pr_{\omega},\pr_{\omega^{\lnot j}})
    \leq
    c_0 \ebb_{\omega}[T_{b_j,j}] \delta_j^2.
\end{align*}
Using Jensen's inequality in \eqref{eq:rf-proof-ej1}, we obtain
\begin{align*}
    \bar e_j
    &\geq
    \frac14
    \exp\paran{
        - \frac{c_0 \delta_j^2}{2^{k-1}}
        \sum_{\omega:\omega_j=0} \ebb_{\omega}[T_{b_j,j}]
    }.
\end{align*}
Since
$$
\frac{1}{2^{k-1}} \sum_{\omega:\omega_j=0} \ebb_{\omega}[T_{b_j,j}]
\leq
2 \bar T_{b_j,j}
\leq
\frac{2\bar T_j}{n-1},
$$
it follows that
\begin{align}
    \bar e_j
    \geq
    \frac14 \exp\paran{- \frac{2c_0 \bar T_j}{n-1}\delta_j^2 }.
    \label{eq:rf-proof-ej2}
\end{align}

Substituting \eqref{eq:rf-proof-ej2} into \eqref{eq:rf-proof-average} gives
\begin{align}
    \srtpi
    \geq
    \frac14 \sum_{j \in [k]} p_j \delta_j
    \exp\paran{- \frac{2c_0 \bar T_j}{n-1}\delta_j^2 }.
    \label{eq:rf-proof-main-lb}
\end{align}

We now optimize each term in \eqref{eq:rf-proof-main-lb} with respect to $\delta_j$. For each $j \in [k]$, we set
$$
\delta_j
=
\sqrt{\frac{n-1}{4c_0 \bar T_j}}.
$$
Since we work in the data-rich regime and $k$ and $1/\pmin$ are treated as constants, we may assume $T$ is large enough so that $\delta_j \leq 1/4$ for all $j$. Plugging the value of $\delta_j$ into \eqref{eq:rf-proof-main-lb}, we obtain
\begin{align}
    \srtpi
    \geq
    c_1 \sqrt{n-1} \sum_{j \in [k]} \frac{p_j}{\sqrt{\bar T_j}}
    \geq
    c_2 \sqrt{\frac{n}{T}} \sum_{j \in [k]} \frac{p_j}{\sqrt{\bar q_j}},
    \label{eq:rf-proof-q-lb}
\end{align}
for universal constants $c_1,c_2>0$, where
$$
\bar q_j \coloneqq \frac{\bar T_j}{T},
\qquad
\sum_{j \in [k]} \bar q_j = 1.
$$

It remains to optimize the right hand side of \eqref{eq:rf-proof-q-lb} over all $\bar {\mb{q}} \in \delkplus$. 

In the proof of Lemma \ref{lem:optimal_q_sp}, we show that the minimum of this optimization problem is achieved at the unique point $\bar{\mb{q}}^*$ with 
$$
\bar q^*_j \propto p_j ^ {2/3},
$$

and the optimal value is $\lpnorm{\mb{p}}{2/3}$.

Therefore, \eqref{eq:rf-proof-q-lb} yields
\begin{align}
    \inf_{\pi \in \Pi(\mathbf p)} \srtpi
    \in
    \Omega\paran{\sqrt{\frac{n}{T}} \, \lpnorm{\mathbf p}{2/3}},
    \label{eq:rf-proof-lower}
\end{align}

which completes the proof.


\end{proof}

\subsubsection{Proof of Lemma \ref{lem:history-free simple regret}} 

\rewardfreeSimpleRegret*

\begin{proof}
For a history-free policy $\pi \in \Piind{hf}(\mathbf p)$, let
$$
T_j \coloneqq \sum_{t=1}^T \mb{1}\{C_t=j\}
$$
denote the number of rounds in which subpopulation $j$ is observed. Since the subpopulation selection rule is history-free, there exist deterministic probabilities $\{q_{t,j}\}_{t \in [T],\, j \in [k]}$, depending only on $\mathbf p$, such that
$$
\pr(C_t=j) = q_{t,j},
\qquad
\sum_{j \in [k]} q_{t,j} = 1
\quad \forall t \in [T].
$$
Hence $T_j$ is a sum of independent Bernoulli random variables, and
$$
\ebb[T_j] = \sum_{t=1}^T q_{t,j} = T \qjpip.
$$

We first prove the lower bound. We use the lower bound construction from the proof of Lemma~\ref{lem: lower bound}. In that proof, for an arbitrary policy $\pi$, one constructs a family of hard instances and derives the bound
$$
\srtpi
\in
\Omega\paran{\sqrt{\frac{n}{T}} \sum_{j \in [k]} \frac{p_j}{\sqrt{\bar q_j}}},
$$
where $\bar q_j$ denotes the average fraction of rounds allocated to subpopulation $j$ across the hard instance family. For a history-free policy, the subpopulation selection rule is independent of the observed history and therefore independent of the realized rewards and of the instance itself. Consequently, the allocation vector is the same on every instance in the family, and thus
$$
\bar q_j = \qjpip
\qquad \forall j \in [k].
$$
Substituting this into the lower bound from Lemma~\ref{lem: lower bound} yields
\begin{align}
    \srtpi
    \in
    \Omega\paran{\sqrt{\frac{n}{T}} \sum_{j \in [k]} \frac{p_j}{\sqrt{\qjpip}} }.
    \label{eq:hf-lower}
\end{align}

We now prove the upper bound. For each $j \in [k]$, define the event
$$
\mc{E}_j \coloneqq \brak{\frac{T\qjpip}{2} \leq T_j \leq \frac{3T\qjpip}{2}},
$$
and let
$$
\mc{E} \coloneqq \bigcap_{j \in [k]} \mc{E}_j.
$$
Since $T_j$ is a sum of independent Bernoulli random variables with mean $T\qjpip$, Lemma~\ref{lem:independent-bernoulli-count} with $\eta = 1/k T$ implies that
$$
\pr(\mc{E}_j) \geq 1-\frac{1}{kT},
$$
where we use the assumption $T\qjpip > 24 \ln(2kT)$.

Therefore, by a union bound,
\begin{align}
    \pr(\mc{E}^c)
    \leq
    \sum_{j \in [k]} \pr(\mc{E}_j^c)
    \leq
    \frac{1}{T}.
    \label{eq:hf-event-fail}
\end{align}

Fix any instance $(\bnu,\mathbf p)$. On the event $\mc{E}$, each subpopulation $j$ is observed at least $T\qjpip/2$ times. Since the subroutine $\algsr$ is anytime and achieves the optimal simple regret rate in Lemma~\ref{lem:simple_regret}, there exists a universal constant $c_1>0$ such that the expected recommendation gap in subpopulation $j$ is at most
$$
c_1 \sqrt{\frac{n}{T_j}}
\leq
c_1 \sqrt{\frac{2n}{T\qjpip}}.
$$
Therefore, on $\mc{E}$,
\begin{align*}
    \text{SR}(\pi,T,(\bnu,\mathbf p))
    &=
    \sum_{j \in [k]} p_j \Delta_j \\
    &\leq
    c_1 \sqrt{\frac{2n}{T}} \sum_{j \in [k]} \frac{p_j}{\sqrt{\qjpip}}.
\end{align*}

On the complement event $\mc{E}^c$, the simple regret is at most $1$, since rewards lie in $[0,1]$ and $\sum_j p_j = 1$. Hence
\begin{align*}
    \text{SR}(\pi,T,(\bnu,\mathbf p))
    &\leq
    c_1 \sqrt{\frac{2n}{T}} \sum_{j \in [k]} \frac{p_j}{\sqrt{\qjpip}}
    + \pr(\mc{E}^c).
\end{align*}
Taking the supremum over all instances and using \eqref{eq:hf-event-fail}, we obtain
\begin{align*}
    \srtpi
    &\leq
    c_1 \sqrt{\frac{2n}{T}} \sum_{j \in [k]} \frac{p_j}{\sqrt{\qjpip}}
    + \frac{1}{T}.
\end{align*}
Finally, since $\qjpip \leq 1$ for all $j$, we have
$$
\sum_{j \in [k]} \frac{p_j}{\sqrt{\qjpip}}
\geq
\sum_{j \in [k]} p_j
=
1,
$$
and therefore the term $1/T$ is of smaller order than the main term. Thus,
\begin{align}
    \srtpi
    \in
    O\paran{\sqrt{\frac{n}{T}} \sum_{j \in [k]} \frac{p_j}{\sqrt{\qjpip}} }.
    \label{eq:hf-upper}
\end{align}

Combining \eqref{eq:hf-lower} and \eqref{eq:hf-upper} proves the claim.
\end{proof}

\subsubsection{Proof of Lemma \ref{lem:optimal_q_sp}}

\optimalQSp*

\begin{proof}
Recall that
$$
S_{\mathbf p}(\mathbf q) = \sum_{j \in [k]} \frac{p_j}{\sqrt{q_j}},
\qquad
\mathbf q \in \delkplus.
$$
Since $p_j>0$ for all $j \in [k]$, each term $p_j q_j^{-1/2}$ is strictly convex on $(0,\infty)$. Therefore, $S_{\mathbf p}$ is strictly convex on $\delkplus$. Hence, if a minimizer exists, it is unique.

We now characterize this minimizer. Since $p_j>0$ for all $j$, we have
$$
\frac{p_j}{\sqrt{q_j}} \to +\infty
\qquad \text{as } q_j \downarrow 0.
$$
Therefore, the minimum over the simplex cannot be attained on the boundary, and the minimizer must lie in the interior. Thus, we may use the first order optimality conditions for the equality-constrained problem
\begin{align*}
    \min_{\mathbf q \in \mathbb{R}^k} \quad & \sum_{j \in [k]} \frac{p_j}{\sqrt{q_j}} \\
    \text{s.t.} \quad & \sum_{j \in [k]} q_j = 1, \qquad q_j > 0 \ \ \forall j \in [k].
\end{align*}

Consider the Lagrangian
$$
\mathcal L(\mathbf q,\lambda)
=
\sum_{j \in [k]} p_j q_j^{-1/2}
+
\lambda \paran{\sum_{j \in [k]} q_j - 1}.
$$
At the optimum, for each $j \in [k]$,
$$
\frac{\partial \mathcal L}{\partial q_j}
=
-\frac{1}{2} p_j q_j^{-3/2} + \lambda
=
0.
$$
Hence
$$
q_j^{-3/2} = \frac{2\lambda}{p_j},
$$
which implies
$$
q_j = \paran{\frac{p_j}{2\lambda}}^{2/3}.
$$
Therefore, there exists a constant $c>0$ such that
$$
q_j = c\, p_j^{2/3}
\qquad \forall j \in [k].
$$
Using the constraint $\sum_{j \in [k]} q_j = 1$, we obtain
$$
1 = \sum_{j \in [k]} q_j = c \sum_{j \in [k]} p_j^{2/3},
$$
and thus
$$
c = \frac{1}{\sum_{l \in [k]} p_l^{2/3}}.
$$
Substituting back gives
$$
q_j^*(\mathbf p)
=
\frac{p_j^{2/3}}{\sum_{l \in [k]} p_l^{2/3}},
\qquad j \in [k].
$$

Since $S_{\mathbf p}$ is strictly convex on $\delkplus$, this minimizer is unique.

Then, by substituting this back to $S_{\mb{p}}(\mb{q})$, we obtain
$$
S_{\mathbf p}(\mathbf q^*(\mathbf p)) = 
\paran{\sum_{j \in [k]} p_j ^ {2/3} } ^ {3/2}=
 \lpnorm{\mathbf p}{2/3}.
$$
\end{proof}

\subsubsection{Proof of Lemma \ref{lem:active_passive_gap}}

\activePassiveGap*

\begin{proof}

Recall that
$$
R(\mathbf p) = \frac{\lpnorm{\mathbf p}{1/2}^{1/2}}{\lpnorm{\mathbf p}{2/3}}.
$$

We first show that $R(\mathbf p) \geq 1$ for every $\mathbf p \in \delkplus$. Let $a_j \coloneqq \sqrt{p_j}$ for all $j \in [k]$, and let $\mathbf a = (a_1,\dots,a_k)$. Then
$$
\lpnorm{\mathbf a}{1} = \sum_{j \in [k]} \sqrt{p_j} = \lpnorm{\mathbf p}{1/2}^{1/2},
\qquad
\lpnorm{\mathbf a}{2}^2 = \sum_{j \in [k]} p_j = 1,
$$
and
$$
\lpnorm{\mathbf a}{4/3}^2
=
\paran{\sum_{j \in [k]} p_j^{2/3}}^{3/2}
=
\lpnorm{\mathbf p}{2/3}.
$$

Applying the interpolation part of Theorem~\ref{thm:lp-comparison-interpolation} with $p=1$, $r=4/3$, $q=2$, and $\theta=1/2$, we obtain
$$
\lpnorm{\mathbf a}{4/3} \leq \lpnorm{\mathbf a}{1}^{1/2}\lpnorm{\mathbf a}{2}^{1/2}.
$$
Since $\lpnorm{\mathbf a}{2}=1$, this yields
$$
\lpnorm{\mathbf p}{2/3} = \lpnorm{\mathbf a}{4/3}^2 \leq \lpnorm{\mathbf a}{1} = \lpnorm{\mathbf p}{1/2}^{1/2},
$$
and hence $R(\mathbf p) \geq 1$.

For the upper bound, applying the comparison part of Theorem~\ref{thm:lp-comparison-interpolation} with $p=1/2$ and $q=2/3$ gives
$$
\lpnorm{\mathbf p}{1/2} \leq k^{1/2}\lpnorm{\mathbf p}{2/3}.
$$
Taking square roots and using $\lpnorm{\mathbf p}{2/3} \geq \lpnorm{\mathbf p}{1} = 1$, we obtain
$$
\lpnorm{\mathbf p}{1/2}^{1/2} \leq k^{1/4}\lpnorm{\mathbf p}{2/3},
$$
which implies $R(\mathbf p) \leq k^{1/4}$.

It remains to prove the matching lower bound. For $k=1$, the claim is trivial, so assume $k \geq 2$. Consider the family of probability vectors
$$
\mathbf p^{(\varepsilon)}
=
\paran{1-\varepsilon,\frac{\varepsilon}{k-1},\dots,\frac{\varepsilon}{k-1}},
\qquad
\varepsilon \in (0,1).
$$
Choose
$$
\varepsilon = \frac{1}{\sqrt{k-1}}.
$$
Then
\begin{align*}
    \lpnorm{\mathbf p^{(\varepsilon)}}{1/2}^{1/2}
    =
    \sqrt{1-\varepsilon}
    +
    (k-1)\sqrt{\frac{\varepsilon}{k-1}} 
    =
    \sqrt{1-\varepsilon}
    +
    \sqrt{\varepsilon(k-1)}
    \geq
    (k-1)^{1/4}.
\end{align*}
On the other hand,
\begin{align*}
    \lpnorm{\mathbf p^{(\varepsilon)}}{2/3}
    &=
    \paran{
        (1-\varepsilon)^{2/3}
        +
        (k-1)\paran{\frac{\varepsilon}{k-1}}^{2/3}
    }^{3/2} \\
    &=
    \paran{
        (1-\varepsilon)^{2/3}
        +
        (k-1)^{1/3}\varepsilon^{2/3}
    }^{3/2}.
\end{align*}
With $\varepsilon = (k-1)^{-1/2}$, we have
$$
(k-1)^{1/3}\varepsilon^{2/3} = 1,
$$
and therefore
$$
\lpnorm{\mathbf p^{(\varepsilon)}}{2/3}
=
\paran{
    (1-\varepsilon)^{2/3}+1
}^{3/2}
\leq
2^{3/2}
=
\sqrt{8}.
$$
Combining the two bounds yields
$$
R\paran{\mathbf p^{(\varepsilon)}}
=
\frac{\lpnorm{\mathbf p^{(\varepsilon)}}{1/2}^{1/2}}{\lpnorm{\mathbf p^{(\varepsilon)}}{2/3}}
\geq
\frac{(k-1)^{1/4}}{\sqrt{8}}.
$$
Hence
$$
\sup_{\mathbf p \in \delkplus} R(\mathbf p) \in \Omega\paran{k^{1/4}}.
$$

Combining the upper and lower bounds, we conclude that
$$
\sup_{\mathbf p \in \delkplus} R(\mathbf p) \in \Theta\paran{k^{1/4}},
$$
and this order is tight.
\end{proof}

\subsubsection{Proof of Theorem \ref{thm:budget}}

\budget*

\begin{proof}
For simplicity, assume that $\alpha T$ is an integer. Replacing $\alpha T$ by $\lfloor \alpha T \rfloor$ changes the number of passive and active rounds by at most one and therefore does not affect the regret rate.

We first characterize the optimizer of
\begin{align}
    \label{eq:budget-opt-proof}
    \min_{\mathbf q \in \delkplus}
    \quad &
    S_{\mathbf p}(\mathbf q)
    =
    \sum_{j \in [k]} \frac{p_j}{\sqrt{q_j}}
    \\
    \text{s.t.} \quad &
    q_j \geq (1-\alpha)p_j, \qquad \forall j \in [k].
    \nonumber
\end{align}

When $\alpha=0$, the constraints reduce to $q_j \geq p_j$ for all $j$, and since $\sum_j q_j = \sum_j p_j = 1$, the feasible set contains only the point $\mathbf q=\mathbf p$. Hence $\qstalphap=\mathbf p$, and the claim is immediate. We therefore assume in the remainder of the proof that $\alpha \in (0,1]$.

Since $\delkplus$ is not compact, we instead consider the same problem over the closed simplex $\delk$. The feasible set
$$
\mathcal Q_\alpha
\coloneqq
\brak{\mathbf q \in \delk : q_j \geq (1-\alpha)p_j,\ \forall j \in [k]}
$$
is compact and nonempty, so the minimum is attained on $\mathcal Q_\alpha$. Moreover, because each $p_j>0$,
$$
\frac{p_j}{\sqrt{q_j}} \to +\infty
\qquad \text{as } q_j \downarrow 0.
$$
Hence the minimizer cannot lie on the boundary $\{q_j=0\}$, and therefore lies in $\delkplus$. Thus the minimizer over $\mathcal Q_\alpha$ is also the minimizer of \eqref{eq:budget-opt-proof}.

The objective is strictly convex on $\delkplus$, since each map $q_j \mapsto p_j q_j^{-1/2}$ is strictly convex on $(0,\infty)$. Hence the minimizer is unique.

We now derive its form. Let
$$
b_j \coloneqq (1-\alpha)p_j,
\qquad j \in [k].
$$
By the KKT conditions, there exist $\lambda \in \mathbb{R}$ and multipliers $\nu_j \geq 0$ such that
$$
-\frac12 p_j q_j^{-3/2} + \lambda - \nu_j = 0,
\qquad
\nu_j(b_j-q_j)=0,
\qquad
q_j \geq b_j.
$$
Therefore, if $q_j>b_j$, then $\nu_j=0$ and
$$
q_j = \paran{\frac{p_j}{2\lambda}}^{2/3} = c\, p_j^{2/3}
$$
for some constant $c>0$. If $q_j=b_j$, then the lower bound constraint is active. Hence the unique optimizer must be of the form
$$
\qstalphapj = \max\big((1-\alpha)p_j,\; c^* p_j^{2/3}\big),
\qquad j \in [k],
$$
for some constant $c^* \geq 0$.

To determine $c^*$, define
$$
F(c) \coloneqq \sum_{j \in [k]} \max\big((1-\alpha)p_j,\; c p_j^{2/3}\big).
$$
Then $F$ is continuous and nondecreasing on $[0,\infty)$. Moreover, since $\alpha>0$,
$$
F(0) = \sum_{j \in [k]} (1-\alpha)p_j = 1-\alpha < 1,
$$
and clearly $F(c)\to\infty$ as $c\to\infty$. Hence there exists at least one $c^*\geq 0$ such that $F(c^*)=1$.

We now show uniqueness. Let $p_{\min} \coloneqq \min_j p_j$. If $c < p_{\min}^{1/3}$, then for every $j \in [k]$,
$$
c p_j^{2/3} < p_{\min}^{1/3} p_j^{2/3} \leq p_j,
\qquad
(1-\alpha)p_j < p_j,
$$
and therefore
$$
\max\big((1-\alpha)p_j,\; c p_j^{2/3}\big) < p_j.
$$
Summing over $j$ gives $F(c)<1$ for all $c<p_{\min}^{1/3}$. On the other hand, for $c \geq p_{\min}^{1/3}$, the coordinate corresponding to $p_{\min}$ is already on the linear branch, so $F$ has strictly positive slope and is therefore strictly increasing on $[p_{\min}^{1/3},\infty)$. Since $F(c)<1$ for all $c<p_{\min}^{1/3}$ and $F$ is strictly increasing afterwards, the equation $F(c)=1$ has a unique solution $c^*$.

Thus,
$$
\qstalphapj = \max\big((1-\alpha)p_j,\; c^* p_j^{2/3}\big),
\qquad j \in [k],
$$
is indeed the unique optimizer of \eqref{eq:budget-opt-proof}.

We now show that Algorithm~\ref{algo:budget} has expected subpopulation proportions equal to $\qstalphap$. By construction, the algorithm computes the same constant $c^*$ and sets
$$
q_j^* = \max\big((1-\alpha)p_j,\; c^* p_j^{2/3}\big)
= \qstalphapj,
\qquad
r_j = \frac{q_j^*-(1-\alpha)p_j}{\alpha}.
$$
Since $q_j^* \geq (1-\alpha)p_j$, we have $r_j \geq 0$, and
\begin{align*}
    \sum_{j \in [k]} r_j
    &=
    \frac{\sum_{j \in [k]} q_j^* - (1-\alpha)\sum_{j \in [k]} p_j}{\alpha}
    =
    \frac{1-(1-\alpha)}{\alpha}
    = 1,
\end{align*}
so $\mathbf r \in \delkplus$.

During the first $(1-\alpha)T$ rounds, the algorithm samples subpopulations according to $\mathbf p$, hence the expected number of observations of subpopulation $j$ in this phase is
$$
(1-\alpha)T p_j.
$$
During the remaining $\alpha T$ rounds, the algorithm samples according to $\mathbf r$, so the expected number of observations of subpopulation $j$ in the active phase is
$$
\alpha T r_j
=
\alpha T \cdot \frac{q_j^*-(1-\alpha)p_j}{\alpha}
=
T\big(q_j^*-(1-\alpha)p_j\big).
$$
Therefore the total expected number of observations of subpopulation $j$ is
$$
(1-\alpha)T p_j + T\big(q_j^*-(1-\alpha)p_j\big) = T q_j^*.
$$
Hence the expected subpopulation proportions of Algorithm~\ref{algo:budget} are exactly
$$
\mathbf q(\texttt{Alg.1},\mathbf p) = \mathbf q^* = \qstalphap.
$$

The subpopulation selection rule of the algorithm depends only on $\mathbf p$, $\alpha$, and the deterministic vector $\mathbf r$. It is therefore independent of the observed history, so the algorithm is history-free. Moreover, it is passive for exactly $(1-\alpha)T$ rounds and active for exactly $\alpha T$ rounds, hence it belongs to $\Piind{\alpha}(\mathbf p)$.

We next verify the condition of Lemma~\ref{lem:history-free simple regret}. Since $c^* \geq p_{\min}^{1/3}$ by the argument above, for every $j \in [k]$,
$$
\qstalphapj
=
\max\big((1-\alpha)p_j,\; c^* p_j^{2/3}\big)
\geq
c^* p_j^{2/3}
\geq
p_{\min}^{1/3} p_j^{2/3}
\geq
p_{\min}.
$$
Thus
$$
\min_{j \in [k]} \qstalphapj \geq p_{\min}.
$$
Since we work in the data rich regime and assume $T \gg 1/p_{\min}$, it follows that,
$$
T \qstalphapj > 24k\ln(2T),
\qquad
\forall j \in [k].
$$
We then apply Lemma~\ref{lem:history-free simple regret} with $\qpip=\qstalphap$ to obtain
\begin{align*}
    \text{SR}(\texttt{Alg.1}, T, \mathbf p)
    &\in
    \ocal\Big(
        \sqrt{\frac{n}{T}}
        \sum_{j \in [k]} \frac{p_j}{\sqrt{\qstalphapj}}
    \Big) \\
    &=
    \ocal\Big(
        \sqrt{\frac{n}{T}} \, S_{\mathbf p}(\qstalphap)
    \Big).
\end{align*}
By definition of $\vstalphap$ as the optimal value of \eqref{eq:budget-opt-proof},
$$
S_{\mathbf p}(\qstalphap) = \vstalphap.
$$
Hence
$$
\text{SR}(\texttt{Alg.1}, T, \mathbf p)
\in
\ocal\Big(\sqrt{\frac{n}{T}} \, \vstalphap\Big).
$$
This proves the theorem.
\end{proof}

\subsubsection{Proof of Lemma \ref{lem:alphamin}}

\alphamin*

\begin{proof}
Recall that the optimal value $\vstalphap$ is defined as the minimum of
\begin{align}
    \label{eq:alphamin-proof-budget}
    \min_{\mathbf q \in \delkplus}
    \quad &
    S_{\mathbf p}(\mathbf q)
    =
    \sum_{j \in [k]} \frac{p_j}{\sqrt{q_j}}
    \\
    \text{s.t.} \quad &
    q_j \geq (1-\alpha)p_j, \qquad \forall j \in [k].
    \nonumber
\end{align}
By Lemma~\ref{lem:optimal_q_sp}, the unique minimizer of the unconstrained problem over $\delkplus$ is
$$
q_j^{\mathrm{act}}
\coloneqq
\frac{p_j^{2/3}}{\sum_{\ell \in [k]} p_\ell^{2/3}},
\qquad j \in [k],
$$
and its objective value is
$$
S_{\mathbf p}(\mathbf q^{\mathrm{act}})
=
\lpnorm{\mathbf p}{2/3}.
$$

We claim that $\mathbf q^{\mathrm{act}}$ is feasible for \eqref{eq:alphamin-proof-budget} whenever $\alpha \geq \alphmin$. Indeed, the feasibility condition is
$$
q_j^{\mathrm{act}} \geq (1-\alpha)p_j,
\qquad \forall j \in [k],
$$
which is equivalent to
$$
\frac{p_j^{2/3}}{\sum_{\ell \in [k]} p_\ell^{2/3}}
\geq
(1-\alpha)p_j,
\qquad \forall j \in [k].
$$
Since $p_j>0$, dividing by $p_j$ gives
$$
\frac{p_j^{-1/3}}{\sum_{\ell \in [k]} p_\ell^{2/3}}
\geq
1-\alpha,
\qquad \forall j \in [k].
$$
Equivalently,
$$
1-\alpha
\leq
\min_{j \in [k]}
\frac{p_j^{-1/3}}{\sum_{\ell \in [k]} p_\ell^{2/3}}.
$$
Now the function $x \mapsto x^{-1/3}$ is decreasing on $(0,\infty)$, so
$$
\min_{j \in [k]} p_j^{-1/3} = p_{\max}^{-1/3},
$$
where $p_{\max} = \max_j p_j$. Therefore,
$$
\min_{j \in [k]}
\frac{p_j^{-1/3}}{\sum_{\ell \in [k]} p_\ell^{2/3}}
=
\frac{p_{\max}^{-1/3}}{\sum_{\ell \in [k]} p_\ell^{2/3}}.
$$
Hence the feasibility condition becomes
$$
1-\alpha
\leq
\frac{p_{\max}^{-1/3}}{\sum_{\ell \in [k]} p_\ell^{2/3}},
$$
or equivalently
$$
\alpha
\geq
1-
\frac{p_{\max}^{-1/3}}{\sum_{\ell \in [k]} p_\ell^{2/3}}
=
\alphmin.
$$
Thus, for every $\alpha \in [\alphmin,1]$, the fully active optimizer $\mathbf q^{\mathrm{act}}$ is feasible for the budgeted problem \eqref{eq:alphamin-proof-budget}.

Since \eqref{eq:alphamin-proof-budget} minimizes the same objective $S_{\mathbf p}$ as the fully active problem, but over a smaller feasible set, its optimal value is at least the unconstrained optimum:
$$
\vstalphap \geq \lpnorm{\mathbf p}{2/3}.
$$
On the other hand, when $\alpha \geq \alphmin$, the vector $\mathbf q^{\mathrm{act}}$ is feasible, and therefore
$$
\vstalphap \leq S_{\mathbf p}(\mathbf q^{\mathrm{act}})
= \lpnorm{\mathbf p}{2/3}.
$$
Combining the two inequalities yields
$$
\vstalphap = \lpnorm{\mathbf p}{2/3},
\qquad \forall \alpha \in [\alphmin,1].
$$

The final statement follows immediately from Theorem~\ref{thm:budget}: for every $\alpha \in [\alphmin,1]$, Algorithm~\ref{algo:budget} satisfies
$$
\text{SR}(\texttt{Alg.1},T,\mathbf p)
\in
\ocal\Big(\sqrt{\frac{n}{T}}\,\vstalphap\Big)
=
\ocal\Big(\sqrt{\frac{n}{T}}\,\lpnorm{\mathbf p}{2/3}\Big).
$$
This is exactly the optimal rate of fully active policies, which, by Lemma~\ref{lem: lower bound}, is also the optimal rate over all policies up to constant factors.
\end{proof}

\subsection{Proofs of Section \ref{sec:unknown_p}}

This section presents the omitted proofs of Section \ref{sec:unknown_p} of the main text.

\subsubsection{Proof of Theorem \ref{thm:unknown_p}}

\unknownP*

\begin{proof}
We analyze a slightly modified version of EETC in which the passive phase ends at time $\tau_2-1$, the frozen estimate is
$$
\phfreeze \coloneqq \hat{\mathbf p}(\tau_2-1),
$$
and the active phase starts at round $\tau_2$. Compared to the algorithm as stated, this removes the single wasted passive round at time $\tau_2$. This changes the number of samples collected from each subpopulation by at most one and therefore does not affect the regret rate. We will therefore ignore this one round discrepancy in the remainder of the proof.

To analyze the stopping times, consider an auxiliary passive sequence $(\widetilde C_t)_{t \in [T]}$ of i.i.d.\ draws from $\mathbf p$. Up to time $\tau_2-1$, EETC observes exactly the prefix of this sequence, so both stopping times $\tau_1,\tau_2$ are measurable functions of $(\widetilde C_t)_{t \in [T]}$. For $t \in [T]$ and $j \in [k]$, define
$$
\widetilde N_{t,j} \coloneqq \sum_{s=1}^t \mb{1}\{\widetilde C_s=j\},
\qquad
\hat p_{t,j} \coloneqq \frac{\widetilde N_{t,j}}{t}.
$$

We first show that after $O(\log T)$ passive rounds, the empirical proportions are uniformly accurate. Let
$$
m_T \coloneqq \left\lceil \frac{24 \ln(2kT^2)}{p_{\min}} \right\rceil,
\qquad
p_{\min} \coloneqq \min_{j \in [k]} p_j.
$$
Since $p_{\min}>0$ is fixed, $m_T \in O(\log T)$.

For each fixed $t \geq m_T$ and $j \in [k]$, the count $\widetilde N_{t,j}$ is binomial with mean $tp_j \geq m_T p_{\min} \geq 24\ln(2kT^2)$. Hence, by Lemma~\ref{lem:independent-bernoulli-count},
$$
\pr\paran{\frac{tp_j}{2} \leq \widetilde N_{t,j} \leq \frac{3tp_j}{2}} \geq 1-\frac{1}{kT^2}.
$$
Define the event
$$
\mc{G}_1
\coloneqq
\bigcap_{t=m_T}^{T} \bigcap_{j \in [k]}
\brak{
\frac{tp_j}{2} \leq \widetilde N_{t,j} \leq \frac{3tp_j}{2}
}.
$$
By a union bound over at most $kT$ pairs $(t,j)$,
\begin{align}
    \pr(\mc{G}_1^c)
    \leq
    \sum_{t=m_T}^{T}\sum_{j=1}^{k}\frac{1}{kT^2}
    \leq
    \frac{1}{T}.
    \label{eq:unknown-g1}
\end{align}

We next record the consequences of $\mc{G}_1$.

First, on $\mc{G}_1$, for every $j \in [k]$,
$$
\widetilde N_{m_T,j} \geq \frac{m_T p_j}{2} \geq \frac{m_T p_{\min}}{2} \geq 12\ln(2kT^2) > \ln(kT^2),
$$
and therefore, by the definition of $\tau_1$,
\begin{align}
    \tau_1 \leq m_T \in O(\log T).
    \label{eq:unknown-tau1}
\end{align}

We now show that the frozen estimate $\phfreeze$ is accurate. Define
$$
\beta(\mathbf u)
\coloneqq
1-\alphmininp{\mathbf u}
=
\frac{u_{\max}^{-1/3}}{\sum_{j \in [k]} u_j^{2/3}},
\qquad
u_{\max} \coloneqq \max_j u_j.
$$
We first note that $\beta(\mathbf u)$ admits a uniform positive lower bound over all probability vectors $\mathbf u \in \delk$. Indeed, since $u_{\max} \leq 1$, we have
$$
u_{\max}^{-1/3} \geq 1.
$$
Moreover, since $x \mapsto x^{2/3}$ is concave on $[0,\infty)$, Jensen's inequality gives
$$
\frac{1}{k} \sum_{j \in [k]} u_j^{2/3}
\leq
\paran{\frac{1}{k} \sum_{j \in [k]} u_j}^{2/3}
=
\paran{\frac{1}{k}}^{2/3}.
$$
Multiplying both sides by $k$ yields
$$
\sum_{j \in [k]} u_j^{2/3} \leq k^{1/3}.
$$
Hence, for every $\mathbf u \in \delk$,
\begin{align}
    \beta(\mathbf u) \geq k^{-1/3}.
    \label{eq:unknown-beta-universal}
\end{align}

Since $m_T = O(\log T)$, for sufficiently large $T$ we have
$$
\frac{m_T}{T} < k^{-1/3}.
$$
Combining this with \eqref{eq:unknown-beta-universal}, we obtain
$$
\frac{t}{T} \leq \frac{m_T}{T} < k^{-1/3} \leq \beta(\hat{\mathbf p}_t),
\qquad \forall t \leq m_T.
$$
Therefore, the stopping condition defining $\tau_2$ cannot be satisfied at any time $t \leq m_T$, and thus
\begin{align}
    \tau_2 > m_T.
    \label{eq:unknown-tau2-lower}
\end{align}

On the other hand, on the event $\mc{G}_1$, we have for every $t \in [m_T,T]$ and every $j \in [k]$,
\begin{align}
    \frac{p_j}{2} \leq \hat p_{t,j} \leq \frac{3p_j}{2}.
    \label{eq:unknown-phat-good}
\end{align}
Since \eqref{eq:unknown-tau2-lower} implies $\tau_2-1 \geq m_T$, it follows from \eqref{eq:unknown-phat-good} that, on $\mc{G}_1$,
\begin{align}
    \frac{p_j}{2} \leq \phfreezej \leq \frac{3p_j}{2},
    \qquad \forall j \in [k].
    \label{eq:unknown-freeze-good}
\end{align}

This implies
$$
2^{-2/3} p_j^{2/3} \leq (\phfreezej)^{2/3} \leq \paran{\frac32}^{2/3} p_j^{2/3}.
$$
Summing over $j$ and raising to the power $3/2$ proves
\begin{align}
    \frac12 \lpnorm{\mathbf p}{2/3}
    \leq
    \lpnorm{\phfreeze}{2/3}
    \leq
    \frac32 \lpnorm{\mathbf p}{2/3}.
    \label{eq:unknown-freeze-norm}
\end{align}

We now analyze phase (III). Let
$$
s \coloneqq \frac{\tau_2-1}{T}.
$$
By the definition of $\tau_2$ and the modified algorithm, we have
$$
s \leq \beta(\phfreeze)=1-\alphmininp{\phfreeze}.
$$
Therefore, by Lemma~\ref{lem:alphamin}, the optimizer of the budgeted problem with passive fraction $s$ and distribution $\phfreeze$ coincides with the fully active optimizer for $\phfreeze$, namely
\begin{align}
    q_j
    =
    \frac{(\phfreezej)^{2/3}}{\sum_{\ell \in [k]} (\hat p ^{\text{fr}}_{\ell})^{2/3}},
    \qquad j \in [k].
    \label{eq:unknown-q}
\end{align}
This is exactly the vector used by the algorithm in the active phase, and the active phase sampling probabilities are then
$$
r_j = \frac{q_j - s\phfreezej}{1-s}.
$$

We next show that the final number of observations of each subpopulation is of order $Tq_j$ with high probability. For $j \in [k], t \in [T]$, let
$$
N_{t,j} \coloneqq \sum_{s=1}^{t} \mb{1}\{C_s=j\}
$$
denote the number of times EETC observes subpopulation $j$ by time $t$. Conditional on $\phfreeze$ and $\tau_2$, the passive counts are fixed and equal to
$$
N_{\tau_2-1,j} = sT \phfreezej.
$$
Let $A_j$ denote the number of times subpopulation $j$ is selected during the active phase. Conditional on $\phfreeze$ and $\tau_2$, the variable $A_j$ is a sum of independent Bernoulli random variables with mean
$$
M_j \coloneqq T(q_j-s\phfreezej).
$$
Define
$$
\mc{J}_{\mathrm{big}}
\coloneqq
\brak{j \in [k] : M_j \geq 24\ln(2kT)},
\qquad
\mc{J}_{\mathrm{small}} \coloneqq [k]\setminus \mc{J}_{\mathrm{big}}.
$$
For each $j \in \mc{J}_{\mathrm{big}}$, Lemma~\ref{lem:independent-bernoulli-count} with $\eta=1/(kT)$ gives
$$
\pr\paran{A_j \geq \frac{M_j}{2} \,\middle|\, \phfreeze,\tau_2} \geq 1-\frac{1}{kT}.
$$
Let
$$
\mc{G}_2 \coloneqq \bigcap_{j \in \mc{J}_{\mathrm{big}}} \brak{A_j \geq \frac{M_j}{2}}.
$$
Then, conditional on $(\phfreeze,\tau_2)$, a union bound gives
$$
\pr(\mc{G}_2^c \mid \phfreeze,\tau_2) \leq \frac{1}{T}.
$$
Taking expectation with respect to $(\phfreeze,\tau_2)$ yields
\begin{align}
    \pr(\mc{G}_2^c)
    =
    \ebb\brak{\pr(\mc{G}_2^c \mid \phfreeze,\tau_2)}
    \leq
    \frac{1}{T}.
    \label{eq:unknown-g2}
\end{align}

We claim that on $\mc{G}_1 \cap \mc{G}_2$,
\begin{align}
    N_{T,j} \geq \frac{Tq_j}{2},
    \qquad \forall j \in [k].
    \label{eq:unknown-final-counts}
\end{align}
Indeed, if $j \in \mc{J}_{\mathrm{big}}$, then on $\mc{G}_2$,
\begin{align*}
    N_{T,j}
    &=
    sT\phfreezej + A_j
    \geq
    sT\phfreezej + \frac{M_j}{2} \\
    &=
    sT\phfreezej + \frac{T(q_j-s\phfreezej)}{2}
    =
    \frac{T(q_j+s\phfreezej)}{2}
    \geq
    \frac{Tq_j}{2}.
\end{align*}
If $j \in \mc{J}_{\mathrm{small}}$, then $M_j < 24\ln(2kT)$. We also have
$$
N_{T,j} \geq sT\phfreezej = Tq_j - M_j.
$$
Moreover, from \eqref{eq:unknown-q},
$$
q_j
=
\frac{(\phfreezej)^{2/3}}{\sum_{\ell \in [k]} (\hat p ^{\text{fr}}_{\ell})^{2/3}}
\geq
\hat p ^{\text{fr}}_{\min},
$$
because
$$
\sum_{\ell \in [k]} (\hat p ^{\text{fr}}_{\ell})^{2/3}
\leq
(\hat p ^{\text{fr}}_{\min}) ^{-1/3} \sum_{\ell \in [k]} \hat p ^{\text{fr}}_{\ell}
=
(\hat p ^{\text{fr}}_{\min})^{-1/3}.
$$
Together with \eqref{eq:unknown-freeze-good}, this implies
$$
q_j \geq \hat p ^{\text{fr}}_{\min} \geq \frac{p_{\min}}{2}.
$$
Since $T \gg 1/p_{\min}, k$, we have
$$
Tq_j \geq \frac{Tp_{\min}}{2} \geq 48\ln(2kT),
$$
and therefore, for $j \in \mc{J}_{\mathrm{small}}$,
$$
N_{T,j} \geq Tq_j - M_j \geq Tq_j - 24\ln(2kT) \geq \frac{Tq_j}{2}.
$$
This proves \eqref{eq:unknown-final-counts}.

Now fix an arbitrary instance. Let $\Delta_j$ denote the recommendation gap in subpopulation $j$ at the end of the horizon. Since the bandit subroutine $\algsr$ is anytime and achieves the optimal standard bandit simple regret rate, there exists a universal constant $c_3>0$ such that, conditional on $N_{T,j}$,
$$
\ebb[\Delta_j \mid N_{T,j}] \leq c_3 \sqrt{\frac{n}{N_{T,j}}}.
$$
Therefore,
\begin{align*}
    \text{SR}(\texttt{EETC},T,\mathbf p)
    &=
    \sum_{j \in [k]} p_j \ebb[\Delta_j] \\
    &=
    \sum_{j \in [k]} p_j \ebb[\Delta_j \mb{1}_{\mc{G}_1 \cap \mc{G}_2}]
    +
    \sum_{j \in [k]} p_j \ebb[\Delta_j \mb{1}_{(\mc{G}_1 \cap \mc{G}_2)^c}] \\
    &\leq
    c_3 \sum_{j \in [k]} p_j \ebb\brak{\sqrt{\frac{n}{N_{T,j}}}\mb{1}_{\mc{G}_1 \cap \mc{G}_2}}
    + \pr((\mc{G}_1 \cap \mc{G}_2)^c).
\end{align*}
Using \eqref{eq:unknown-final-counts}, this yields
\begin{align}
    \text{SR}(\texttt{EETC},T,\mathbf p)
    \leq
    c_3 \sqrt{\frac{2n}{T}} \sum_{j \in [k]} \frac{p_j}{\sqrt{q_j}}
    + \pr((\mc{G}_1 \cap \mc{G}_2)^c).
    \label{eq:unknown-regret-pre}
\end{align}

It remains to bound the sum. By \eqref{eq:unknown-q},
\begin{align*}
    \sum_{j \in [k]} \frac{p_j}{\sqrt{q_j}}
    &=
    \paran{\sum_{\ell \in [k]} (\hat p ^{\text{fr}}_{\ell})^{2/3}}^{1/2}
    \sum_{j \in [k]} p_j (\phfreezej)^{-1/3}.
\end{align*}
From \eqref{eq:unknown-freeze-good},
$$
\sum_{\ell \in [k]} (\hat p ^{\text{fr}}_{\ell})^{2/3}
\leq
\paran{\frac32}^{2/3} \sum_{\ell \in [k]} p_\ell^{2/3},
\qquad
(\phfreezej)^{-1/3} \leq 2^{1/3} p_j^{-1/3}.
$$
Therefore,
\begin{align}
    \sum_{j \in [k]} \frac{p_j}{\sqrt{q_j}}
    &\leq
    3^{1/3}
    \paran{\sum_{\ell \in [k]} p_\ell^{2/3}}^{1/2}
    \sum_{j \in [k]} p_j^{2/3}
    \nonumber\\
    &=
    3^{1/3} \lpnorm{\mathbf p}{2/3}.
    \label{eq:unknown-s-bound}
\end{align}

Finally, by \eqref{eq:unknown-g1} and \eqref{eq:unknown-g2},
$$
\pr((\mc{G}_1 \cap \mc{G}_2)^c) \leq \pr(\mc{G}_1^c)+\pr(\mc{G}_2^c) \leq \frac{2}{T}.
$$
Substituting this and \eqref{eq:unknown-s-bound} into \eqref{eq:unknown-regret-pre}, we obtain
$$
\text{SR}(\texttt{EETC},T,\mathbf p)
\leq
c_4 \sqrt{\frac{n}{T}} \, \lpnorm{\mathbf p}{2/3}
+
\frac{2}{T}
$$
for a universal constant $c_4>0$. Hence, for sufficiently large $T$,
$$
{\text{SR}(\texttt{EETC}, T, \mathbf p)}
\in
O\Big(\sqrt{\frac{n}{T}} \, \lpnorm{\mathbf p}{2/3}\Big).
$$

The final claim follows from the known-$\mathbf p$ analysis: the optimal fully active policy that knows $\mathbf p$ achieves rate
$$
\Theta\Big(\sqrt{\frac{n}{T}} \, \lpnorm{\mathbf p}{2/3}\Big),
$$
and this also matches the known-$\mathbf p$ lower bound up to constant factors. Therefore EETC achieves the same simple regret rate, up to constants, as the optimal fully active known-$\mathbf p$ policy.
\end{proof}

\section{Additional Experiments}\label{apx:exp}

This section provides additional details for the real-world experiment on the MovieLens dataset described in the main text, together with further experimental results.

\subsection{Details of the MovieLens Dataset}

Our real-world experiment is based on the MovieLens 1M dataset, which contains $1{,}000{,}209$ ratings from $6{,}040$ users on roughly $3{,}900$ movies. We construct a bandit instance by restricting attention to the five most frequently rated movies in the dataset, namely \emph{American Beauty (1999)}, \emph{Star Wars: Episode IV - A New Hope (1977)}, \emph{Star Wars: Episode V - The Empire Strikes Back (1980)}, \emph{Star Wars: Episode VI - Return of the Jedi (1983)}, and \emph{Jurassic Park (1993)}. These five movies define the treatment set, so that $n=5$.

The subpopulations are defined by the Cartesian product of gender and age group, yielding $2 \times 7 = 14$ demographic groups and therefore $k=14$ subpopulations. After filtering the data to the selected movies and joining with the demographic attributes, the resulting dataset contains $14{,}964$ rating records. Each record consists of a treatment--subpopulation pair together with the corresponding MovieLens rating, which we use directly as the reward. Thus, rewards take values in $\{1,2,3,4,5\}$.

For each subpopulation $j$, the population weight $p_j$ is estimated by the empirical fraction of filtered records belonging to that group. The resulting subpopulation distribution is
\begin{align*}
    \mathbf p =
(0.0084,\;0.0444,\;0.0863,\;0.0456,\;0.0216,\;0.0165,\;0.0088, \\ 
\;0.0233,\;0.1549,\;0.2994,\;0.1474,\;0.0558,\;0.0539,\;0.0336).
\end{align*}

\begin{figure}[t]
    \centering
    \begin{subfigure}[b]{0.48\textwidth}
        \centering
        \includegraphics[width=\textwidth]{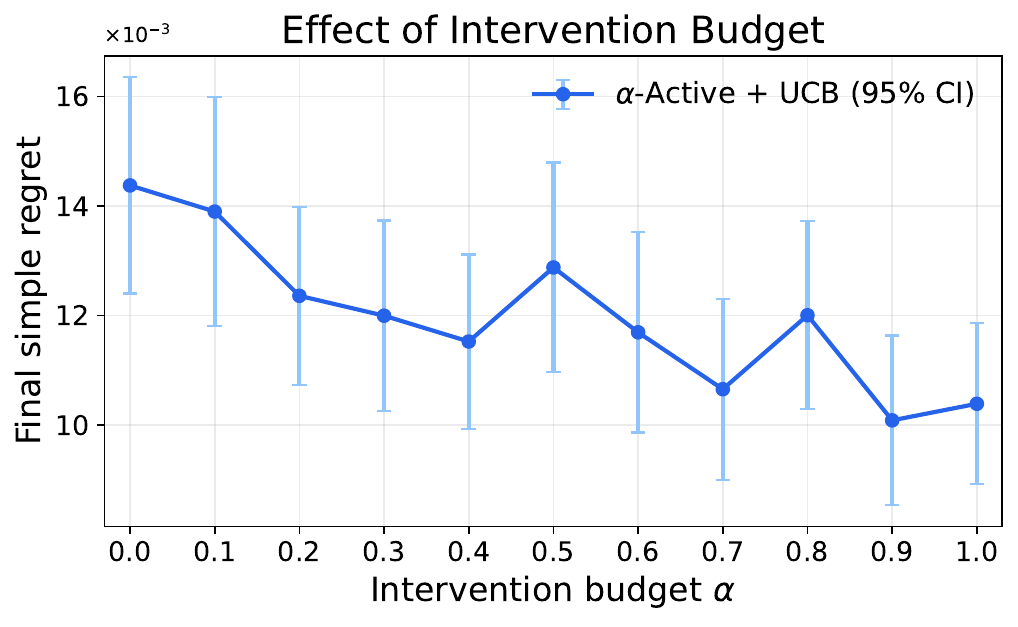}
        \caption{$T=5000$}
        \label{fig:budget_movielens_5000}
    \end{subfigure}
    \hfill
    \begin{subfigure}[b]{0.48\textwidth}
        \centering
        \includegraphics[width=\textwidth]{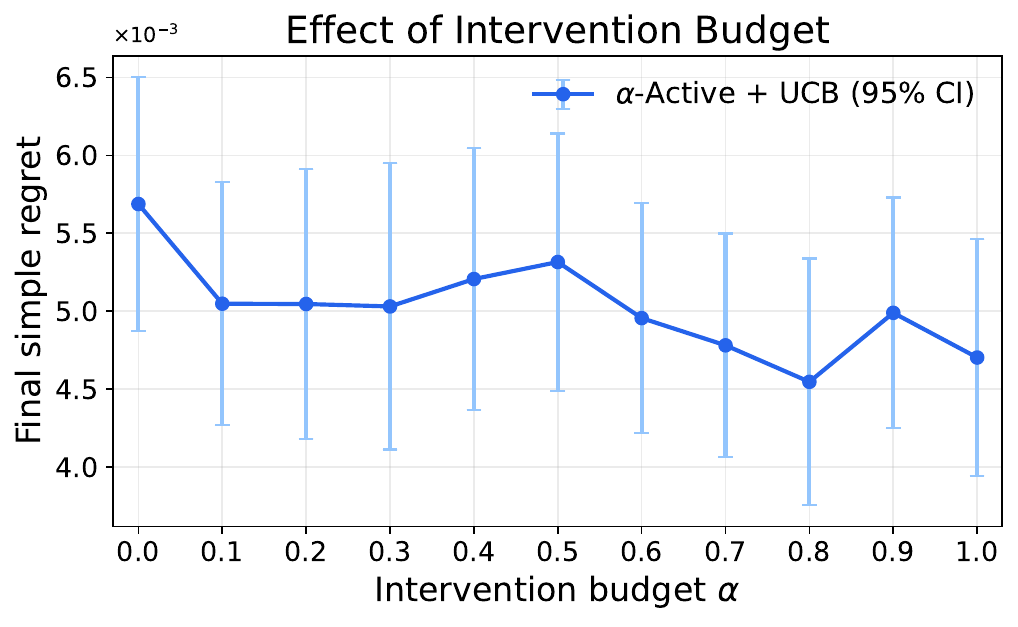}
        \caption{$T=10000$}
        \label{fig:budget_movielens_10000}
    \end{subfigure}
    \caption{Simple regret of the budgeted active algorithm on the MovieLens instance as a function of the intervention budget $\alpha$. As predicted by the theory, the regret decreases as the active budget increases, interpolating between the passive and fully active regimes.}
    \label{fig:budget_movielens}
\end{figure}

The coordinates are ordered by gender and age group, with the seven female groups first and the seven male groups second:
\begin{align*}
    &(F,\text{ Under 18}),\; (F,\text{ 18--24}),\; (F,\text{ 25--34}),\; (F,\text{ 35--44}),\; (F,\text{ 45--49}),\; (F,\text{ 50--55}),\; (F,\text{ 56+}), \\
    &(M,\text{ Under 18}),\; (M,\text{ 18--24}),\; (M,\text{ 25--34}),\; (M,\text{ 35--44}),\; (M,\text{ 45--49}),\; (M,\text{ 50--55}),\; (M,\text{ 56+}).
\end{align*}

Similarly, for each treatment--subpopulation pair $(a,j)$, the mean reward $\mu_{a,j}$ is computed as the empirical average of the corresponding ratings in the filtered dataset. During interaction, when an algorithm selects treatment $a$ for subpopulation $j$, the environment samples uniformly with replacement from the historical ratings associated with $(a,j)$ and returns the sampled rating as the reward.

At the end of the horizon, each algorithm recommends one treatment for every subpopulation, and performance is evaluated using the context-weighted simple regret defined in the main text, with the empirical MovieLens frequencies and empirical cell means used as the ground-truth values.

\subsection{Budgeted Interventions on MovieLens}

We next evaluate the budgeted active algorithm on the MovieLens instance \cite{harper2015movielens}. In this experiment, we vary the intervention budget $\alpha$ from $0$ to $1$ in increments of $0.1$, where $\alpha=0$ corresponds to the fully passive policy and $\alpha=1$ to the fully active policy. We report results for two horizons, $T=5000$ and $T=10000$, and for UCB as the standard bandit subroutine.

Figure~\ref{fig:budget_movielens} shows the average simple regret over $100$ independent runs, together with $95\%$ confidence intervals. Although the curves are somewhat noisy due to the variance of the real data environment, both figures clearly exhibit the expected decreasing trend: as the intervention budget $\alpha$ increases, the simple regret decreases. This is consistent with the theory, which predicts that allowing more active control over the sampled subpopulations improves the achievable regret by moving the allocation closer to the optimal active design.

Overall, these results support the budgeted intervention analysis and illustrate the gradual interpolation between passive and fully active performance on real data.

\subsection{Synthetic Experiments}

We next report synthetic experiments designed to highlight the gains predicted by the theory. In all synthetic experiments, for each treatment--subpopulation pair $(a,j)$, the reward distribution is Bernoulli and independent across pairs. For each subpopulation $j$, one treatment is selected uniformly at random to be the optimal one, and its mean is set to
$$
\frac{1}{2} + \delta_j,
\qquad
\delta_j \coloneqq \sqrt{\frac{n}{T}}\, p_j^{-1/3},
$$
while the remaining $n-1$ treatments have mean $1/2$. This construction follows the worst-case instances appearing in our theoretical analysis.

To induce a non-uniform subpopulation distribution with a pronounced active-passive gap, we choose $\mathbf p$ as
$$
\paran{1 - \epsilon,\; \frac{\epsilon}{{k-1}},\; \dots,\; \frac{\epsilon}{{k-1}} },
$$
where $\epsilon = \frac{1}{\sqrt{k-1}}$.
That is, one subpopulation has a larger weight, while the remaining $k-1$ subpopulations have equal smaller weights. This choice is aligned with the worst-case regime discussed in the proof of Lemma \ref{lem:active_passive_gap}, in which the gap factor $R(\mathbf p)$ becomes large. All results below are averaged over $100$ independent runs and are reported with $95\%$ confidence intervals.

We compare three algorithms: the passive baseline, the known-$p$ fully active algorithm, and EETC. In each case, the standard bandit subroutine is UCB, since it has a better performance compared to the uniform subroutine.

\paragraph{Varying the number of subpopulations.}
In the first synthetic experiment, we fix $n=5$ and vary the number of subpopulations over
$
k \in \{5,10,20,30,40\}.
$
For each value of $k$, we set
$
T = 100nk,
$
so that the horizon scales linearly with the total number of treatment--subpopulation pairs and the algorithms have sufficient budget to explore all pairs.

Figure~\ref{fig:synth-vary-k} summarizes the results. The left panel reports the average simple regret, while the right panel reports the regret ratio between the passive baseline and each algorithm, namely
$$
\frac{\text{Regret of passive baseline}}{\text{Regret of algorithm}},
$$
so values larger than $1$ indicate improvement over the passive policy. The right panel also includes the theoretical quantity $R(\mathbf p)$.

The figure shows that both proposed algorithms uniformly outperform the passive baseline over all tested values of $k$. Moreover, the gap becomes larger as $k$ increases, which is consistent with the theory, since the value of $R(\mathbf p)$ also increases with $k$ in this construction. The known-$p$ fully active policy and EETC are not clearly separated from one another, which is expected: they have the same asymptotic rate, and the finite sample variance is relatively high.

\begin{figure}[t]
    \centering
    \begin{subfigure}[b]{0.48\textwidth}
        \centering
        \includegraphics[width=\textwidth]{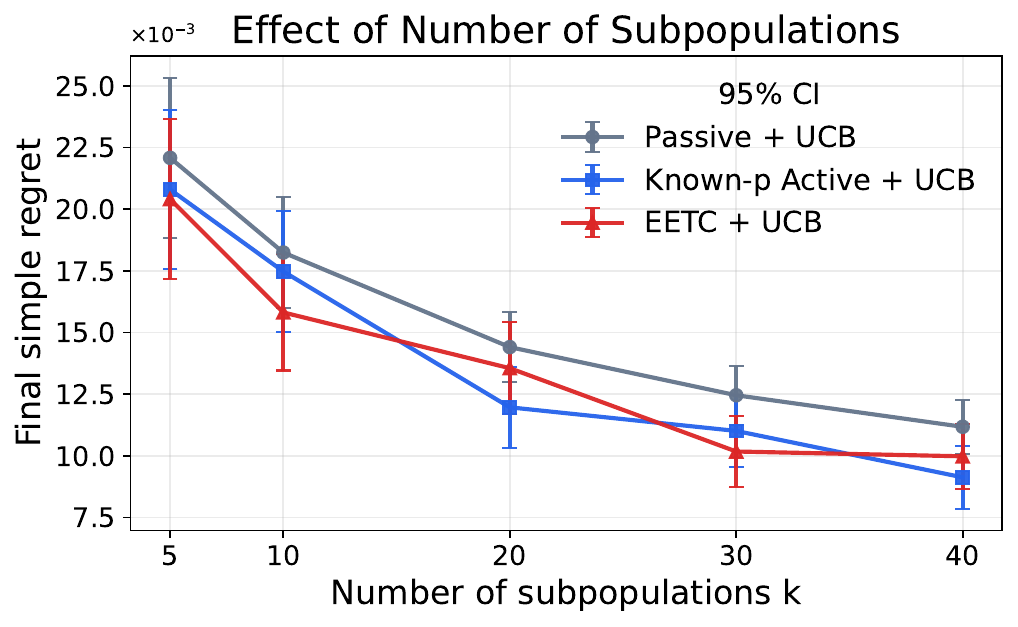}
        \caption{Average simple regret.}
        \label{fig:synth-vary-k-regret}
    \end{subfigure}
    \hfill
    \begin{subfigure}[b]{0.48\textwidth}
        \centering
        \includegraphics[width=\textwidth]{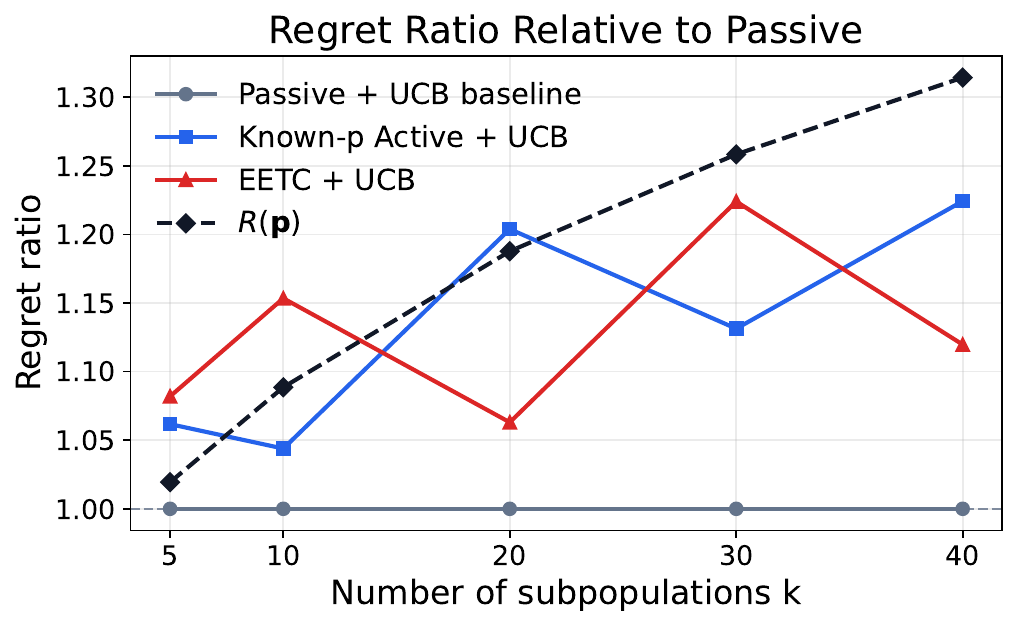}
        \caption{Improvement ratio over passive baseline.}
        \label{fig:synth-vary-k-ratio}
    \end{subfigure}
    \caption{Synthetic experiment with varying number of subpopulations $k$ and fixed $n=5$, with horizon $T=100nk$. Left: average simple regret over $100$ runs with $95\%$ confidence intervals. Right: ratio between the average regret of the passive baseline and the regret of each algorithm. Values above $1$ indicate improvement over the passive policy. The theoretical factor $R(\mathbf p)$ is also shown for reference.}
    \label{fig:synth-vary-k}
\end{figure}

\begin{figure}[t]
    \centering
    \begin{subfigure}[b]{0.48\textwidth}
        \centering
        \includegraphics[width=\textwidth]{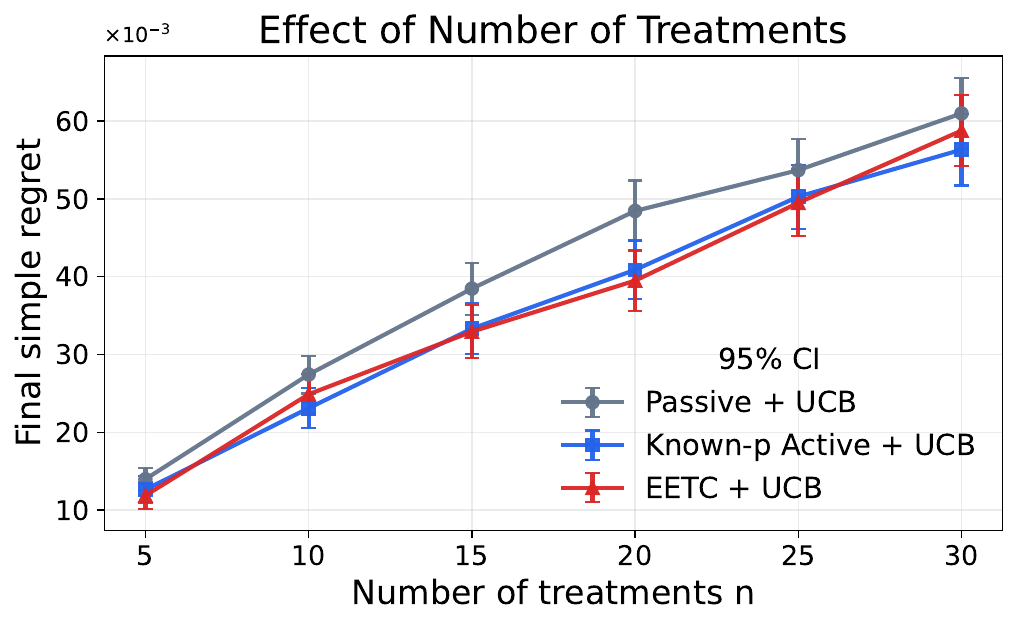}
        \caption{Average simple regret.}
        \label{fig:synth-vary-n-regret}
    \end{subfigure}
    \hfill
    \begin{subfigure}[b]{0.48\textwidth}
        \centering
        \includegraphics[width=\textwidth]{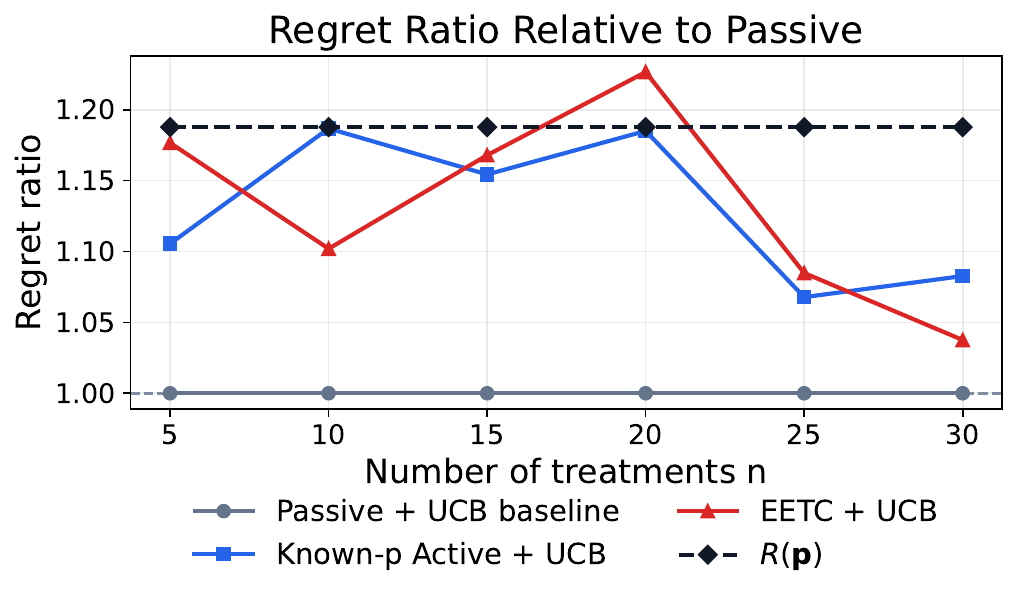}
        \caption{Improvement ratio over passive baseline.}
        \label{fig:synth-vary-n-ratio}
    \end{subfigure}
    \caption{Synthetic experiment with varying number of treatments $n$ and fixed $k=20$, with horizon $T=10000$. Left: average simple regret over $100$ runs with $95\%$ confidence intervals. Right: ratio between the average regret of the passive baseline and the regret of each algorithm. Values above $1$ indicate improvement over passive sampling, while the passive baseline appears as the constant line at value $1$.}
    \label{fig:synth-vary-n}
\end{figure}

\paragraph{Varying the number of treatments.}
In the second synthetic experiment, we fix the number of subpopulations to $k=20$ and vary the number of treatments over
$$
n \in \{5,10,15,20,25,30\}.
$$
Unlike the previous experiment, we keep the horizon fixed at $T=10000$ for all values of $n$. For each value of $n$, we fix one synthetic instance and average the final simple regret over $100$ independent runs, varying only the reward samples and the algorithmic randomness. All algorithms use UCB as the within-subpopulation subroutine.

Figure~\ref{fig:synth-vary-n} reports the same two summaries as in the previous experiment. The left panel shows the average simple regret with $95\%$ confidence intervals, and the right panel shows the regret ratio relative to the passive baseline,
$$
\frac{\text{Regret of passive baseline}}{\text{Regret of algorithm}}.
$$
Thus, values above $1$ indicate improvement over passive sampling, while the passive baseline itself appears as the constant line at value $1$.

The active algorithms outperform the passive baseline for all tested values of $n$. The relative gain becomes smaller as $n$ increases, which is expected in this fixed-horizon experiment: larger $n$ creates more treatment--subpopulation reward distributions to explore, while the total budget remains unchanged. As before, the known-$p$ active algorithm and EETC are close to each other and are not uniformly ordered, consistent with their matching theoretical rates and the finite-sample variability of the experiment.





\end{document}